\begin{document}

\title{Distributive Stochastic Learning for Delay-Optimal OFDMA Power and Subband Allocation}

\newtheorem{Thm}{Theorem}
\newtheorem{Lem}{Lemma}
\newtheorem{Cor}{Corollary}
\newtheorem{Def}{Definition}
\newtheorem{Exam}{Example}
\newtheorem{Alg}{Algorithm}
\newtheorem{Prob}{Problem}
\newtheorem{Rem}{Remark}
\newtheorem{Proof}{Proof}
\newtheorem{Asump}{Assumption}
\newtheorem{Prop}{Proposition}

\author{Ying~Cui, \IEEEmembership{Student~Member,~IEEE}, Vincent K.~N.~Lau, \IEEEmembership{Senior~Member,~IEEE},
\\
Department of Electronic and Computer Engineering\\
The Hong Kong University of Science and Technology\\
{\em cuiying@ust.hk}, {\em eeknlau@ee.ust.hk}\\
\thanks{This work is funded by RGC 615609.}}
\markboth{To appear in Transactions on Signal
Processing}{}\maketitle

%
%
%
%
%
%
%
%
%

\begin{abstract}
In this paper, we consider the distributive queue-aware power and
subband allocation design for a delay-optimal OFDMA uplink system
with one base station, $K$ users and $N_F$ independent subbands.
Each mobile has an uplink queue with heterogeneous packet arrivals
and delay requirements. We model the problem as an infinite horizon
average reward Markov Decision Problem (MDP) where the control
actions are functions of the instantaneous Channel State Information
(CSI) as well as the joint Queue State Information (QSI).  To
address the distributive requirement and the issue of exponential
memory requirement and computational complexity, we approximate the
subband allocation Q-factor by the sum of the per-user subband
allocation Q-factor and derive a distributive online stochastic
learning algorithm to estimate the per-user Q-factor and the
Lagrange multipliers (LM) simultaneously and determine the control
actions using an auction mechanism. We show that under the proposed
auction mechanism, the distributive online learning converges almost
surely (with probability 1).
For illustration, we apply the proposed distributive stochastic
learning framework to an application example with exponential packet
size distribution. We show that the delay-optimal power control has
the {\em multi-level water-filling} structure where the CSI
determines the instantaneous power allocation and the QSI determines
the water-level. The proposed algorithm has linear signaling
overhead and computational complexity $\mathcal O(KN)$, which is
desirable from an implementation perspective.

\end{abstract}

\section{Introduction}\label{sec_intro}
There are plenty of literature on cross-layer optimization of power
and subband allocation in
OFDMA systems \cite{ofdma:Cioffi2006,ofdma:Roger1999}. 
Yet, all these works  focused on optimizing the physical layer
performance and the power/subband allocation solutions derived are
all functions of the channel state information (CSI) only. On the
other hand, real life applications are delay-sensitive and it is
critical to consider the bursty arrivals and delay performance in
addition to the conventional physical layer performance (such as
sum-rate or proportional fair) in OFDMA cross-layer design. A
combined framework taking into account of both queueing delay and
physical layer performance is not trivial as it involves both the
queueing theory (to model the queue dynamics) and information theory
(to model the physical layer dynamics).  The first approach converts
the delay constraint into average rate constraint using tail
probability at large delay regime and solve the optimization problem
using information theoretical formulation based on the rate
constraint\cite{Hui:2007,Tang:2007}. While this approach allows
potentially simple solution, the derived control policy will be a
function of the CSI only, which is good only for large delay regime
where the probability of buffer empty is small.  In general,
delay-optimal control actions should be a function of both the CSI
and queue state information (QSI).  In \cite{Yeh:01PhD}, the authors
showed that the Longest Queue Highest Possible Rate (LQHPR) policy
is delay optimal for multiaccess fading channels. However, the
solution utilizes stochastic majorization theory which requires
symmetry among the users and is difficult to extend to other
situations. In \cite{Neelybook:2006,stability:1999}, the authors
studied the {\em queue stability region} of various wireless systems
using Lyapunov drift, which is good only for large delay region.
While all the above works addressed different aspects of the delay
sensitive resource allocation problem, there are still a number of
first order issues to be addressed to obtain decentralized resource
optimization for delay-optimal uplink OFDMA system.

\begin{itemize}

\item {\bf The Curse of Dimensionality} A more general approach is to model the problem as a {\em Markov Decision Problem} (MDP) \cite{Bertsekas:1987,Cao:2007}. However, a primary  difficulty in determining the optimal policy using the MDP approach is
the huge state space involved\footnote{As illustrated later, we
could derive the closed form action given the Q-factor. Therefore,
the curse of the dimension refers to the exponential growth of state
space only. The dimensionality of the action space is not an
issue.}. For instance, the state space is exponentially large in the
number of users. Consider a simple example. For a system with 4
users, $6$ independent subbands, a buffer size of 50 per user and 4
channel states, the system state space contains
$4^{4\times6}\times(50+1)^4$ states, which is already unmanageable.


\item {\bf Decentralized Solution} Most of the solutions in the literature are centralized in which the
processing is done at the base station\cite{WongOFDMAresource}
requiring global knowledge of CSI and QSI from $K$ users.  However,
in the uplink direction, the QSI is only available locally at each
of the $K$ users. Hence, centralized solution at the BS requires all
the $K$ users to deliver their QSI to the BS, which consumes
enormous signaling overhead, and the BS to broadcast the allocation
results for the resource allocations at the Mobile side in the
uplink system. In addition, the centralized solution also leads to a
exponential computational complexity to the BS.

\item {\bf Convergence of Stochastic Iterative Solution}
There are a number of works on decentralized OFDMA control  using
deterministic game \cite{RayLiu:2007} or primal-dual decomposition
theory for solving deterministic NUM
\cite{PalomarMungdecomposition:2006}. The derived distributive
algorithms are iterative in nature where all the nodes exchange some
messages explicitly in solving the {\em master problem}. However,
the CSI is always assumed to be quasi-static during the iterative
updates with message passing. When we consider delay-optimization,
the problem is stochastic in nature and is quite challenging because
the game is played repeatedly and the actions as well as the payoffs
are defined over ergodic realizations of the system states (CSI,
QSI). During the iterative updates, the system state will not be
quasi-static anymore.
\end{itemize}

In this paper, we consider an OFDMA uplink system with one base
station (BS), $K$ users and $N_F$ independent subbands. The
delay-optimal problem is cast into an infinite horizon average
reward constrained Markov Decision Process (MDP).  To address the
distributive requirement and the issue of exponential memory
requirement and computational complexity, we propose a distributive
online stochastic learning algorithm, which only requires knowledge
of the local QSI and the local CSI at each of the $K$ mobiles and
determine the resource control actions using a per-stage auction.
Using separation of time scales, we show that under the proposed
auction mechanism, the distributive online learning converges almost
surely.
For illustration, we apply the proposed distributive stochastic
learning framework to an application example with exponential packet
size distribution. We show that the delay-optimal power control has
the {\em multi-level water-filling} structure where the CSI
determines the instantaneous power allocation and the QSI determines
the water-level. We show that the proposed algorithm converges to
the global optimal solution for sufficiently large number of users.
The proposed algorithm has linear signaling overhead and
computational complexity $\mathcal O(KN)$, which is desirable from
an implementation perspective.


\begin{figure}[h]
\begin{center}
\includegraphics[height=6cm, width=8cm]{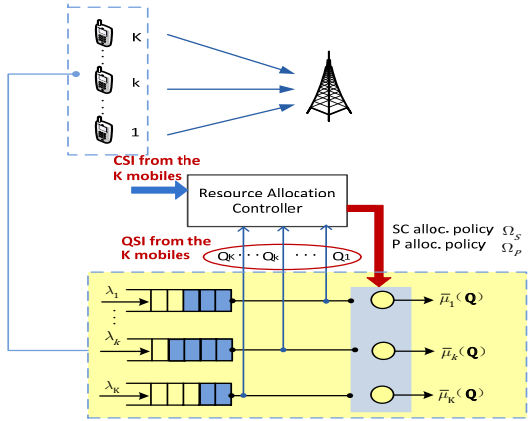}
\caption{OFDMA physical layer and queueing model.}
\label{systemmodel}
\end{center}
\end{figure}

\section{System Models}\label{sec_sys_mod}
\label{sect2}

In this section, we shall elaborate the system model, the OFDMA
physical layer model as well as the underlying queueing model. There
are one BS and $K$ mobile users (each with one uplink queue) in the
OFDMA uplink system with $L$ subcarriers over a frequency selective
fading channel with $N_F$ independent multipaths as illustrated in
Figure~\ref{systemmodel}.  The BS has a {\em cross-layer controller}
which takes the joint channel state information (CSI) and joint
queue state information (QSI) as the inputs and produces a power
allocation and subband allocation actions as outputs\footnote{We
first formulate the problem in a centralized manner and address the
distributive solution in Section \ref{sec_decentralized solution}.}.

We first list the important notations in this paper in table
\ref{table:notation}.
\begin{table}[h]
\begin{center} {\footnotesize
\begin{tabular}{|c|c|}
\hline
$K$ & number of users     \\
$N_F$ & number of independent subbands\\
$N_Q$ &   buffer size  \\
$k$, $n$ & user, subband index \\
$\overline{N}_k$ & mean packet size of user $k$  \\
$t$ & slot index \\
$s_{k,n}$, $p_{k,n}$ & subband, power allocation action \\
$\Omega=(\Omega_p, \Omega_s)$ & power and subband
allocation policy \\
$\mathbf{H}=\{|H_{k,n}|\}$ & joint CSI \\
$\mathbf{Q}=(Q_k)$ & joint QSI  \\
$\mathbf{A}=(A_k)$ & bit/packet arrival vector \\
$\boldsymbol{\chi}=\big(\mathbf{H},\mathbf{Q}\big)$ &
global system state  \\
$\tau$ & frame duration\\
$\lambda_k$ & average arrival rate of user $k$\\
$\mu_k(\boldsymbol{\chi})$ & conditional mean departure rate of user
$k$\\
$P_k$, $P_k^d$ & total power, packet drop rate of user $k$\\
$\{V(\boldsymbol{\chi})\}$ & system potential function on
$\boldsymbol{\chi}$\\
$\{\mathcal{Q}(\boldsymbol{\chi},\mathbf{s})\}$ &  subband
allocation Q-factor\\
$\{\mathcal{Q}^k(\boldsymbol{\chi}_k,\mathbf{s}_k)\}$ & per-user
subband allocation Q-factor \\
$\{q^k(Q, H, s)\}$ & per-user per-subband subband allocation
Q-factor\\
$\overline{\gamma}^k$ & LM w.r.t. average power constraint of
$k$\\
$\underline{\gamma}^k$& LM w.r.t. average pck drop constraint of
$k$\\
$\{\epsilon_t^{q}\}$ & stepsize sequence for per-user potential
update\\
$\{\epsilon_t^{\gamma}\}$ & stepsize sequence for per-user 2 LMs
update\\
 \hline
\end{tabular}}
\end{center}
\caption{\footnotesize List of Important Notations.}
\label{table:notation}
\end{table}

\subsection{OFDMA Physical Layer Model}\label{subsec_mimo_phy_mod}
Let $s_{k,n}\in\{0,1\}$ denote the subband allocation for the $k$-th
user at the $n$-th subband. The received signal from the $k$-th user
at the $n$-th subband of the base station is given by $Y_{k,n}^r =
s_{k,n}(H_{k,n} X_{k,n}^t + Z_{k,n})$, where $X_{k,n}^t$ is the
transmitted symbol, $H_{k,n}$ and $Z_{k,n}(\sim \mathcal{CN}(0,1))$
are the random fading  and the channel noise of the $k$-th user at
the $n$-th subband respectively. The data rate of  user $k$ can be
expressed as:
\begin{equation}
R_k =\sum_{n=1}^{N_F} R_{k,n}=\sum_{n=1}^{N_F} s_{k,n}\log\left(1+
\xi p_{k,n}|H_{k,n}|^2\right) \label{eqn:rate1}
\end{equation}
for some constant $\xi$. Note that the data rate expression in
\eqref{eqn:rate1}  can be used to model both the uncoded and coded
systems. For uncoded system using MQAM constellation, the BER of the
$n$-th subband and the $k$-th user is given by\cite{RayLiu:2007}
$BER_{k,n} \approx  c_1
\exp(-c_2\frac{\Gamma_{k,n}}{2^{R_{k,n}}-1})$, where $\Gamma_{k,n}$
is the received SNR of the $k$-th user at the $n$-th subband, and
hence, for a target BER $\epsilon$, we have $\xi = -\frac{c_2}{\ln
(\epsilon/c_1)}$.  On the other hand, for system with powerful error
correction codes such as LDPC with reasonably large block length
(e.g 8Kbyte) and target PER of 0.1$\%$, the maximum achievable data
rate is given by the instantaneous mutual information (to within 0.5dB SNR). 
In that case, $\xi = 1$\footnote{In this paper, our derived results
are based on $\xi = 1$ for notation simplicity, which can be easily
extended to other cases.}.

\subsection{Source Model, Queue Dynamics and Control Policy}
\label{subsec_queue_mod} In this paper, the time dimension is
partitioned into {\em scheduling slots} indexed by $t$ with slot
duration $\tau$.
\begin{Asump}
The joint CSI of the system is denoted by
$\mathbf{H}(t)=\{|H_{k,n}(t)|\forall k,n\}$, where $|H_{k,n}(t)|$ is
a discrete r.v. distributed according to $\Pr[|H|]$. The CSI is
quasi-static within a scheduling slot and i.i.d. between scheduling
slots\footnote{The quasi-static assumption is realistic for
pedestrian mobility users where the channel coherence time is around
50 ms but typical frame duration is less than 5ms in next generation
wireless systems such as WiMAX. On the other hand, we assume the CSI
is i.i.d. between slots (as in many other literature) in order to
capture first order insights. Similar solution framework can also be
extended to deal with correlated fading.}. ~ \hfill\QED
\label{Asump:H}
\end{Asump}
Let $\mathbf{A}(t)=\big(A_1(t),\cdots,A_K(t)\big)$ be the random new
arrivals (number of bits) at the end of the $t$-th scheduling slot.
\begin{Asump}
The arrival process $A_k(t)$ is i.i.d. over scheduling slots
according to a general distribution $\Pr(A_k)$ with average arrival
rate $\mathbb{E}[A_k]=\lambda_k$. ~ \hfill\QED
\label{Asump:A}
\end{Asump}

Let $\mathbf{Q}(t)=\big(Q_1(t),\cdots,Q_K(t)\big)$ be the joint QSI
of the $K$-user OFDMA system, where $Q_k(t)$ denotes the number of
bits in the $k$-th queue at the beginning of the $t$-th slot. $N_Q$
denotes the maximum buffer size (number of bits). Thus, the
cardinality of the joint QSI is $I_Q=(N_Q+1)^K$, which grows
exponentially with $K$. 
Let $N_H$ denote the cardinality of $|H_{k,n}|$ ($\forall k,n$).
Hence, the cardinality of the global CSI is given by
$I_H=N_H^{N_FK}$. Let $\mathbf{R}(t)=\big(R_1(t),\cdots,R_K(t)\big)$
(bits/second) be the scheduled data rates  of the $K$ users, where
$R_k(t)$ is given by (\ref{eqn:rate1}). We assume the controller is
causal so that new bit arrivals $\mathbf{A}(t)$ is observed only
after the controller's actions at the t-th slot. Hence, the queue
dynamics is given by the following equation:
\begin{align}
Q_k(t + 1) = \min \Big \{\big[Q_k(t)-R_k(t)\tau\big]^+ + A_k(t),N_Q
\Big\}, \nonumber\\
 \forall k \in
\{1,K\}\label{eqn:general-queue-evolution-bit}
\end{align}
where $x^+\triangleq\max\{x,0\}$ and $\tau$ is the  duration of a
scheduling slot.

For notation convenience, we denote
$\boldsymbol{\chi}(t)=\big(\mathbf{H}(t),\mathbf{Q}(t)\big)$ to be
the {\em global system state} at the $t$-th slot. Therefore, the
cardinality of the state space of $\boldsymbol{\chi}$ is
$I_{\chi}=I_H\times I_Q=\big(N_H^{N_F}(N_Q+1)\big)^K$. Given the
observed system state realization $\boldsymbol{\chi}(t)$ at the
beginning of the $t$-th slot, the transmitter may adjust the
transmit power and subband allocation (equivalently data rate
$\mathbf{R}(t)$) according to a {\em stationary power control and
subband allocation policy} defined below\footnote{At the beginning
of the $t$-th scheduling slot, the cross-layer controller observes
the joint CSI $\mathbf{H}(t)$ and the joint QSI $\mathbf{Q}(t)$ and
determines the transmit power and subband allocation  across the $K$
users.}.

\begin{Def}(\emph{Stationary Power Control and Subband Allocation Policy})
A stationary transmit power and subband allocation policy
$\Omega=(\Omega_p,\Omega_s)$ is a mapping from the system state
$\boldsymbol{\chi}$ to the power and subband allocation actions. A
policy $\Omega$ is called {\em feasible} if the associated actions
satisfy the average total transmit power constraint and the subband
assignment constraint. Specifically,
$\Omega_p(\boldsymbol{\chi})=\mathbf{p}=\{p_{k,n}\geq 0: \forall
k,n\}$ and
$\Omega_s(\boldsymbol{\chi})=\mathbf{s}=\{s_{k,n}\in\{0,1\}: \forall
k,n\}$ satisfy
\begin{align}
&\sum_{n=1}^{N_F} \mathbb{E}[p_{k,n}] \leq P_k,\ \forall
k\in\{1,K\},
\label{eqn:tx-pwr-constraint1}\\
&\sum_{k=1}^K s_{k,n} = 1,\ \forall n\in\{1,N_F\}
\label{eqn:subband-constraint1}
\end{align}
Furthermore, $\Omega$ also satisfies an average packet drop rate
constraint for each queue as follows
\begin{equation}
\Pr[Q_k=N_Q]\leq P^d_k, \ \forall k\in\{1,K\}
\label{eqn:ptdrop-constraint1}
\end{equation}~ \hfill\QED
\label{Defn:1}
\end{Def}

From \eqref{eqn:rate1}, the vector queue dynamics is Markovian with
the transition probability given by
\begin{align}
& \Pr[\mathbf{Q}(t+1)|
\boldsymbol{\chi}(t),\Omega(\boldsymbol{\chi}(t))]\nonumber\\
=&
\Pr\big[\mathbf{A}(t)
=\mathbf{Q}(t+1)-[\mathbf{Q}(t)-\mathbf{R}(t)\tau]^+\big]
\nonumber\\
=&\prod_k \Pr\big[A_k(t)=Q_k(t+1)-[Q_k(t)-R_k(t)\tau]^+\big]
\label{general-vec-q-trans-prob}
\end{align}
Note that the $K$ queues are coupled together via the control policy
$\Omega$ and the constraint in (\ref{eqn:subband-constraint1}). 
From Assumption \ref{Asump:H}, the induced random process
$\boldsymbol{\chi}(t)=(\mathbf{H}(t),\mathbf{Q}(t))$ is Markovian
with the following transition probability\footnote{Although the QSI
$\mathbf{Q}(t+1)$ and CSI $\mathbf{H}(t)$ are correlated via the
control action $\Omega(\boldsymbol{\chi}(t))$, due to the i.i.d.
assumption of CSI in Assumption \ref{Asump:H}, $\mathbf{H}(t+1)$ is
independent of $\boldsymbol{\chi}(t)$. Note that $\mathbf H (t)$
being i.i.d. is a special case of Markovian model. Hence,
\eqref{eqn:transition-prob1} holds under the $\mathbf H (t)$ i.i.d.
assumption in Assumption \ref{Asump:H}.}
\begin{align}
 &
\Pr[\boldsymbol{\chi}(t+1)|\boldsymbol{\chi}(t),\Omega(\boldsymbol{\chi}(t))]\nonumber\\
=&
\Pr[\mathbf{H}(t+1)|\boldsymbol{\chi}(t),\Omega(\boldsymbol{\chi}(t))]
\Pr[\mathbf{Q}(t+1)|\boldsymbol{\chi}(t),\Omega(\boldsymbol{\chi}(t))]\nonumber\\
=&\Pr[\mathbf{H}(t+1)]
\Pr[\mathbf{Q}(t+1)|\boldsymbol{\chi}(t),\Omega(\boldsymbol{\chi}(t))]
\label{eqn:transition-prob1}
\end{align}
where
$\Pr[\mathbf{Q}(t+1)|\boldsymbol{\chi}(t),\Omega(\boldsymbol{\chi}(t))]$
is given by \eqref{general-vec-q-trans-prob}. Given a unichain
policy $\Omega$, the induced Markov chain $\{\boldsymbol{\chi}(t)\}$
is ergodic and there exists a unique steady state distribution
$\pi_{\chi}$ where
$\pi_{\chi}(\boldsymbol{\chi})=\lim_{t\rightarrow\infty}\Pr[\boldsymbol{\chi}(t)
= \boldsymbol{\chi}]$. The average utility of the $k$-th user under
a unichain policy $\Omega$ is given by:
\begin{align}
\overline{T}_k(\Omega) = \lim_{T\rightarrow\infty}
\frac{1}{T}\sum_{t=1}^T\mathbb{E}[f(Q_k(t))]
=\mathbb{E}_{\pi_{\chi}}\left[f(Q_{k})\right],\nonumber\\
 \forall
k\in\{1,K\} \label{eqn:delay1}
\end{align}
where $f(Q_{k})$ is a monotonic increasing function of $Q_k$ and
$\mathbb{E}_{\pi_{\chi}}$ denotes expectation w.r.t. the underlying
measure $\pi_{\chi}$. For example, when
$f(Q_k)=\frac{Q_k}{\lambda_k}$, $\overline{T}_k(\Omega)
=\frac{1}{\lambda_k} \mathbb{E}_{\pi_{\chi}}\left[Q_{k}\right]$ is
the {\em average delay} of the $k$-th user. Another interesting
example is the {\em queue outage probability}
$\overline{T}_k(\Omega)=\Pr[Q_k\geq Q_k^o]$, in which
$f(Q_k)=\mathbf{1}[Q_k\geq Q_k^o]$, where $Q_k^o \in \{0,N_Q\}$ is
the reference outage queue state.
Similarly, the average transmit power constraint in
(\ref{eqn:tx-pwr-constraint1}) and the packet drop constraint in
\eqref{eqn:ptdrop-constraint1} can be written as
\begin{align}
\overline{P_{k}}(\Omega)=
\lim_{T\rightarrow\infty}\frac{1}{T}\sum_{t=1}^T\mathbb{E}\Big[\sum_{n}p_{k,n}(t)\Big]
= &\mathbb{E}_{\pi_{\chi}}\Big[\sum_{n}p_{k,n}\Big] \leq
P_k,\nonumber\\
& \ \forall k\in\{1,K\}
 \label{eqn:tx-pwr2}
\end{align}
\begin{align}
\overline{P_{k}^d}(\Omega)=&
\lim_{T\rightarrow\infty}\frac{1}{T}\sum_{t=1}^T\mathbb{E}\Big[\mathbf{1}[Q_k(t)=N_Q]\Big]\nonumber\\
= &\mathbb{E}_{\pi_{\chi}}\Big[\mathbf{1}[Q_k=N_Q]\Big] \leq P_k^d,\
\forall k\in\{1,K\}
 \label{eqn:ptdrop2}
\end{align} 


\section{CMDP Formulation and General Solution of the Delay-Optimal Problem }\label{sec_MDP_formulation}
In this section, we shall formulate the delay-optimal problem as  an
infinite horizon average reward constrained Markov Decision Problem
(CMDP) and discuss the general solution.

\subsection{CMDP Formulation}
\label{subsec_mdp_formulation}

A MDP can be characterized by a tuple of four objects, namely the
state space, the action space, the transition probability kernel as
well as the per-stage reward function. In our delay-optimization
problem, we could associate these four objects as follows:
\begin{itemize}
\item \textbf{State Space}: The state space for the MDP is given by $\{\boldsymbol{\chi}^1,\cdots,
\boldsymbol{\chi}^{I_{\chi}}\}$, where
$\boldsymbol{\chi}^i=\big(\mathbf{H}^i,\mathbf{Q}^i\big)$ ($1\leq i
\leq I_{\chi}$) is a realization of the global system state.
\item \textbf{Action Space}: The action space of the MDP is given by $\{\Omega(\boldsymbol{\chi}^1),\cdots,
\Omega(\boldsymbol{\chi}^{I_\chi})\}$, where $\Omega$ is a unichain
feasible policy as defined in Definition \ref{Defn:1}.
\item \textbf{Transition Kernel}: The transition kernel of the MDP $\Pr[\boldsymbol{\chi}^j|\boldsymbol{\chi}^i,\Omega(\boldsymbol{\chi}^i)]$ is given by \eqref{eqn:transition-prob1}.
\item \textbf{Per-stage Reward}: The per-stage reward function of the MDP is given by $d(\boldsymbol{\chi},\Omega(\boldsymbol{\chi}))=\sum_{k}
\beta_k f(Q_k)$.
\end{itemize}

As a result, the delay-optimal control can be formulated as a CMDP,
which is summarized below.
\begin{Prob}[Delay-Optimal Constrained MDP]
For some positive constants
$\boldsymbol{\beta}=(\beta_1,\cdots,\beta_K)$\footnote{The positive
weighting factors $\boldsymbol{\beta}$ in \eqref{cons-MDP} indicate
the relative importance of buffer delay among the $K$ data streams
and for each given $\boldsymbol{\beta}$, the solution to
\eqref{cons-MDP} corresponds to a point on the Pareto optimal delay
tradeoff boundary of a {\em multi-objective} optimization problem.},
the delay-optimal problem is formulated as
\begin{align}
&
\min_{\Omega}J_{\beta}(\Omega)=\sum_{k=1}^K\beta_k\overline{T}_k(\Omega)\nonumber\\
& \quad \quad \quad \quad \quad = \lim_{T\rightarrow\infty}
\frac{1}{T}\sum_{t=1}^T \mathbb{E}\Big[
d(\boldsymbol{\chi}(t),\Omega(\boldsymbol{\chi}(t)))\Big]
\label{cons-MDP}\\
&  \text{s.t.} \ \text{the power and packet drop rate constraints
in} \,(\ref{eqn:tx-pwr2}), \eqref{eqn:ptdrop2} \nonumber
\end{align}
\label{Prob2}
\end{Prob}


\subsection{Lagrangian Approach to the CMDP}\label{subsec_embedded_mar_chain}

For any LMs $\overline{\gamma}^k, \underline{\gamma}^k>0$, define
the Lagrangian as
$L_{\beta}(\Omega,\boldsymbol{\gamma})=\lim_{T\rightarrow\infty}
\frac{1}{T}\sum_{t=1}^T \mathbb{E}\big[
g(\boldsymbol{\gamma},\boldsymbol{\chi},\Omega(\boldsymbol{\chi}))\big]$,
where $\boldsymbol{\gamma}=(\gamma^1,\cdots,\gamma^K)$ with
$\gamma^k=(\overline{\gamma}^k,\underline{\gamma}^k$),
$g(\boldsymbol{\gamma},\boldsymbol{\chi},\Omega(\boldsymbol{\chi}))
= \sum_{k} \Big(\beta_k f(Q_k)+\overline{\gamma}^k (\sum_n
p_{k,n}-P_k)+\underline{\gamma}^k(\mathbf{1}[Q_k=N_Q]-P_k^d)\Big)$.
Thus, the corresponding unconstrained MDP for a particular LM
$\boldsymbol{\gamma}$ is given by
\begin{align}
G(\boldsymbol{\gamma})=\min_{\Omega}L_{\beta}(\Omega,\boldsymbol{\gamma})
\label{uncons-MDP}
\end{align}
where $G(\boldsymbol{\gamma})$ gives the Lagrange dual function. The
dual problem of the primal problem in Problem \ref{Prob2} is given
by $\max_{\boldsymbol{\gamma}\succeq0}G(\boldsymbol{\gamma})$. The
general solution to the unconstrained MDP in \eqref{uncons-MDP} is
summarized in the following lemma.
\begin{Lem}(\emph{Bellman Equation and Subband Allocation Q-factor})
For a given $\boldsymbol{\gamma}$, the optimizing policy for the
unconstrained MDP in \eqref{uncons-MDP} can be obtained by solving
the {\em Bellman equation}  (associated with the MDP in
\eqref{cons-MDP} ) w.r.t.
$(\theta,\{\mathcal{Q}(\boldsymbol{\chi},\mathbf{s})\})$ as below:
\begin{align}
\mathcal{Q}(\boldsymbol{\chi}^i,\mathbf{s})=&
\min_{\Omega_p(\boldsymbol{\chi}^i)} \Big[
g(\boldsymbol{\gamma},\boldsymbol{\chi}^i,\mathbf{s},\Omega_p(\boldsymbol{\chi}^i))\nonumber\\
& + \sum_{\boldsymbol{\chi}^j}
\Pr[\boldsymbol{\chi}^j|\boldsymbol{\chi}^i,\mathbf{s},\Omega_p(\boldsymbol{\chi}^i)]\min_{\mathbf{s}'}
\mathcal{Q}(\boldsymbol{\chi}^j,\mathbf{s}')\Big]-\theta \nonumber\\
& \quad \quad \quad \quad \quad \quad \quad \quad \quad \quad
\forall \,1\leq i\leq I_{\chi}, \ \forall \mathbf{s}
\label{eqn:Bellman-q-factor}
\end{align}
where $\theta = L_{\beta}^*(\boldsymbol{\gamma}) =
\min_{\Omega}L_{\beta}(\Omega,\boldsymbol{\gamma})$ is the optimal
average reward per stage and
$\{\mathcal{Q}(\boldsymbol{\chi},\mathbf{s})\}$ is the {\em subband
allocation Q-factor}. The optimal control policy\footnote{It is
known that for CDMP, the optimal policy may be randomized policy.
However, for implementation consideration, we are interested in
deterministic policy in this paper.} is  given by
$\Omega^*=(\Omega_p^*,\Omega_s^*)$ with
$\Omega_p^*(\boldsymbol{\chi}^i)$ attaining the minimum of the
R.H.S. of \eqref{eqn:Bellman-q-factor} and
$\Omega_s^*(\boldsymbol{\chi}^i)=\arg
\min_{\mathbf{s}}\mathcal{Q}(\boldsymbol{\chi}^i,\mathbf{s})$ for
any $\boldsymbol{\chi}^i$. Since the policy space we considered
consists of only unichain policies, the associated Markov chain
$\{\boldsymbol{\chi}(t)\}$ is irreducible and there exists a
recurrent state\footnote{For sufficiently large total transmit power
$\{P_1,\cdots,P_K\}$ so that the optimization problem in
\eqref{cons-MDP} is feasible, and the state
$\boldsymbol{\chi}=(\mathbf{H},\mathbf{Q})$ ($\forall \mathbf{H}$
and $\mathbf{Q}=(0,\cdots,0)$) is recurrent.}. Hence, the solution
to \eqref{eqn:Bellman1} is unique up to an additive
constant\cite{Bertsekas:2007}.~ \hfill\QED
\label{Lem:bellman-eqn-Q-factor}
\end{Lem}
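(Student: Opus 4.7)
The plan is to recognize the unconstrained problem \eqref{uncons-MDP} as a finite-state, finite-action average-cost MDP for each fixed $\boldsymbol{\gamma}$, to apply the classical average-cost Bellman optimality theorem for unichain MDPs to its potential function $V(\boldsymbol{\chi})$, and then to peel the discrete subband decision $\mathbf{s}$ off the continuous power decision $\mathbf{p}$ by an outer-then-inner minimization. The subband Q-factor $\mathcal{Q}(\boldsymbol{\chi},\mathbf{s})$ in \eqref{eqn:Bellman-q-factor} then emerges as the value of the inner power minimization for each candidate $\mathbf{s}$, and \eqref{eqn:Bellman-q-factor} itself is obtained by substituting $V=\min_{\mathbf{s}}\mathcal{Q}$ back into that defining relation.

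First I would invoke the standard average-cost Bellman theorem (e.g., \cite{Bertsekas:2007}, Vol.~II, Prop.~4.2.1): under the unichain restriction assumed in the lemma and the finiteness of the state space $\{\boldsymbol{\chi}^1,\dots,\boldsymbol{\chi}^{I_\chi}\}$, there exist a scalar $\theta$ and a potential $V(\boldsymbol{\chi})$ satisfying
\begin{align*}
V(\boldsymbol{\chi}^i) + \theta = \min_{\mathbf{s},\mathbf{p}} \Bigl[ g(\boldsymbol{\gamma},\boldsymbol{\chi}^i,\mathbf{s},\mathbf{p}) + \sum_{\boldsymbol{\chi}^j} \Pr[\boldsymbol{\chi}^j|\boldsymbol{\chi}^i,\mathbf{s},\mathbf{p}] V(\boldsymbol{\chi}^j) \Bigr],
\end{align*}
with $\theta = L_\beta^*(\boldsymbol{\gamma})$ and with any deterministic selector of the right-hand-side argmin being optimal for \eqref{uncons-MDP}. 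Writing $\min_{\mathbf{s},\mathbf{p}} = \min_{\mathbf{s}} \min_{\mathbf{p}}$, defining $\mathcal{Q}(\boldsymbol{\chi}^i,\mathbf{s})$ as the inner partial minimum over $\mathbf{p}$ minus $\theta$, and observing that $V(\boldsymbol{\chi}^i) = \min_{\mathbf{s}} \mathcal{Q}(\boldsymbol{\chi}^i,\mathbf{s})$, I can substitute this last identity for $V(\boldsymbol{\chi}^j)$ inside the definition of $\mathcal{Q}$ to reproduce \eqref{eqn:Bellman-q-factor} exactly; reading off the two nested argmins then yields $\Omega_p^*$ as claimed and $\Omega_s^*(\boldsymbol{\chi}^i) = \arg\min_{\mathbf{s}} \mathcal{Q}(\boldsymbol{\chi}^i,\mathbf{s})$.

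Uniqueness up to an additive constant is then inherited from the unichain hypothesis: since every admissible policy induces a Markov chain with the zero-queue state $(\mathbf{H},(0,\dots,0))$ recurrent, the standard relative-value uniqueness theorem for finite unichain MDPs gives uniqueness of $(\theta,V)$ modulo an additive constant in $V$, which $\mathcal{Q}$ inherits through its defining relation. The only mildly delicate point I expect is attainment of the inner minimum over $\mathbf{p}$; this follows because, for each fixed $\mathbf{s}$, the objective is continuous in $\mathbf{p}\in\mathbb{R}_+^{KN_F}$ and coercive via the power-price term $\overline{\gamma}^k \sum_n p_{k,n}$ in $g$, so a finite minimizer exists. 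The conceptual content of the lemma is therefore the factorization of the joint argmin into a sequential subband-then-power form, with the main analytical work outsourced to the classical unichain average-cost Bellman theorem.
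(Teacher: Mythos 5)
Your proposal is correct and follows essentially the same route as the paper's Appendix A: invoke the classical average-cost Bellman equation for the potential $V(\boldsymbol{\chi})$ under the unichain assumption, define $\mathcal{Q}(\boldsymbol{\chi},\mathbf{s})$ as the partial minimization over the power action only, and substitute $V=\min_{\mathbf{s}}\mathcal{Q}$ back in to obtain \eqref{eqn:Bellman-q-factor}. Your added remarks on coercivity of the power minimization and on uniqueness up to an additive constant are harmless elaborations of points the paper leaves implicit or cites to \cite{Bertsekas:2007}.
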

\begin{proof}
Please refer to Appendix A for the proof.
\end{proof}

Using standard optimization theory\cite{boydconvex:2004}, the
problem in \eqref{uncons-MDP} has an optimal solution for a
particular choice of the LM
$\boldsymbol{\gamma}=\boldsymbol{\gamma}^*$, where
$\boldsymbol{\gamma}^*$ is chosen to satisfy the average power
constraint in \eqref{eqn:tx-pwr2} and packet drop constraint in
\eqref{eqn:ptdrop2}. Moreover, it is shown in
\cite{Borkaractorcritic:2005} that the following saddle point
condition holds:
\begin{eqnarray}
L(\Omega^*, \boldsymbol{\gamma})\leq L(\Omega^*,
\boldsymbol{\gamma}^*) \leq L(\Omega, \boldsymbol{\gamma}^*)
\label{saddle-pt-cond}
\end{eqnarray}
In other words, $(\Omega^*,\boldsymbol{\gamma}^*)$ is a saddle point
of the Lagrangian, then $\Omega^*$ is the primal optimal (i.e.
solving Problem \ref{Prob2}), $\boldsymbol{\gamma}^*$ is the dual
optimal (solving the dual problem) and the duality gap is zero.
Therefore, by solving the dual problem, we can obtain the primal
optimal $\Omega^*$.

\begin{Rem}
The optimal control actions are functions of the {\em subband
allocation Q-factor} $\{\mathcal{Q}(\boldsymbol{\chi},\mathbf{s})\}$
and the $2K$ LMs. Unfortunately, for any given LMs, determining the
subband allocation Q-factor involves solving the Bellman equation in
\eqref{eqn:Bellman-q-factor}, which is a fixed point problem over
the functional space with exponential complexity. In other words, it
is a system of $K^{N_F}I_{\chi}=K^{N_F}\big(N_H^{N_F}(N_Q+1)\big)^K$
non-linear equations with $K^{N_F}I_{\chi}+1$ unknowns
$(\theta,\{\mathcal{Q}(\boldsymbol{\chi},\mathbf{s})\})$. 
Furthermore, even if we could solve it, the solution will be
centralized and the joint  CSI and  QSI knowledge will be required,
which is highly undesirable.~ \hfill\QED
\end{Rem}

\section{General Decentralized Solution via Localized Stochastic Learning and Auction} \label{sec_decentralized solution}
The key steps in obtaining the optimal control policies from the
R.H.S. of the Bellman equation in \eqref{eqn:Bellman-q-factor} rely
on the knowledge of the subband allocation Q-factor
$\{\mathcal{Q}(\boldsymbol{\chi},\mathbf{s})\}$ and the $2K$ LMs
$\{\overline{\gamma}^k,\underline{\gamma}^k\}$ ($1 \leq k\leq K$),
which is very challenging. Brute-force solution of
$\{\mathcal{Q}(\boldsymbol{\chi},\mathbf{s})\}$ and $2K$ LMs has
exponential complexity and requires centralized implementation and
knowledge of the joint CSI and QSI (which also requires huge
signaling
overheads). In this section, 
we shall approximate the {\em subband allocation Q-factor}
$\mathcal{Q}(\boldsymbol{\chi},\mathbf{s})$ by the sum of {\em
per-user subband allocation Q-factor}
$\mathcal{Q}^k(\boldsymbol{\chi}_k,\mathbf{s}_k)$, i.e.
$\mathcal{Q}(\boldsymbol{\chi},\mathbf{s})\thickapprox \sum_k
\mathcal{Q}^k(\boldsymbol{\chi}_k,\mathbf{s}_k)$. 
Based on the approximate Q-factor, we shall derive a per-stage
decentralized control policy using a {\em per-stage auction}. Next,
we shall propose a localized online stochastic learning algorithm
(performed locally at each MS $k$) to determine the {\em per-user
Q-factor} $\{\mathcal{Q}^k(\boldsymbol{\chi}_k,\mathbf{s}_k)\}$ as
well as the two local LMs $\gamma^k=(\overline{\gamma}^k,
\underline{\gamma}^k)$ based on observations of the local CSI and
local QSI as well as the
auction result. 
Furthermore, we shall prove that under the proposed per-stage
auction, the local online stochastic learning algorithm converges
almost surely  (with probability 1).


\subsection{Linear Approximation on the Subband Allocation Q-Factor and Distributive Power Control}\label{subsec_Q_factor_approx}

Denote the per-user system state, channel state, subband allocation
actions and  power control actions as $\boldsymbol{\chi}_k=(Q_k,
\mathbf{H}_k)$, $\mathbf{H}_k=\{|H_{k,n}|: \forall n\}$,
$\mathbf{s}_k=\{s_{k,n}:\forall n\}$ and
$\mathbf{p}_k=\{p_{k,n}:\forall n\}$, respectively. To reduce the
size of the state space and to decentralize the resource allocation,
we approximate $\mathcal{Q}(\boldsymbol{\chi},\mathbf{s})$ by the
sum of {\em per-user subband allocation Q-factor}
$\mathcal{Q}^k(\boldsymbol{\chi}_k,\mathbf{s}_k)$, i.e.
\begin{align}
\mathcal{Q}(\boldsymbol{\chi},\mathbf{s})\thickapprox \sum_k
\mathcal{Q}^k(\boldsymbol{\chi}_k,\mathbf{s}_k)\label{eqn:approximate-Q}
\end{align}
where $\mathcal{Q}^k(\boldsymbol{\chi}_k,\mathbf{s}_k)$ satisfies
the following {\em per-user subband allocation Q-factor fixed point
equation} for each MS $k$:
\begin{align}
\mathcal{Q}^k(\boldsymbol{\chi}^i_k,\mathbf{s}_k)=
\min_{\mathbf{p}_k} \Big[&
g_k(\boldsymbol{\gamma}^k,\boldsymbol{\chi}^i_k,\mathbf{s}_k,\mathbf{p}_k)\nonumber\\
&+ \sum_{\boldsymbol{\chi}^j_k}
\Pr[\boldsymbol{\chi}^j_k|\boldsymbol{\chi}^i_k,\mathbf{s}_k,\mathbf{p}_k]
W^k(\boldsymbol{\chi}^j_k)\Big]-\theta^k, \nonumber\\
&\quad \quad \quad \quad \quad \quad \quad\forall 1\leq i\leq
I_{\chi}^k,  \forall \mathbf{s}_k
\label{eqn:Bellman-per-user-q-factor}
\end{align}
where
$g_k(\boldsymbol{\gamma}^k,\boldsymbol{\chi}_k,\mathbf{s}_k,\mathbf{p}_k)=\beta_k
f(Q_k)+\overline{\gamma}^k (\sum_n
p_{k,n}-P_k)+\underline{\gamma}^k(\mathbf{1}[Q_k=N_Q]-P_k^d)$ and
$W^k(\boldsymbol{\chi}_k)=\mathbb{E}\big[
\mathcal{Q}^k\big(\boldsymbol{\chi}_k,\{s_{k,n}=\mathbf{1}[|H_{k,n}|\geq
H^*_{K-1}]\}\big)|\boldsymbol{\chi}_k \big] $ ($H^*_{K-1}$ denotes
the largest order statistic of the $(K-1)$ i.i.d. random variables
with the same distribution as $|H_{k,n}|$), and
$I^k_{\chi}=N_H^{N_F}(N_Q+1)$ is the cardinality of  the space of
per-user system state. Note that under the subband allocation
Q-factor approximation, the state space of $K$ users is
significantly reduced from $I_{\chi}=\big(N_H^{N_F}(N_Q+1)\big)^K$
to $KI^k_{\chi}=KN_H^{N_F}(N_Q+1)$.

\subsection{Per-Stage Subband Auction}\label{subsec_auction}

The subband allocation control can be obtained by minimizing the
original subband allocation Q-factor in \eqref{eqn:Bellman-q-factor}
over subband allocation actions. Using the approximate Q-factor, the
subband allocation control is given by
$\Omega_s^*(\boldsymbol{\chi})=\arg
\min_{\mathbf{s}}\mathcal{Q}(\boldsymbol{\chi},\mathbf{s})\approx
\arg \min_{\mathbf{s}} \sum_k
\mathcal{Q}^k(\boldsymbol{\chi}_k,\mathbf{s}_k)$. This can be
obtained via a per-stage subband auction with $K$ bidders (MSs) and
one auctioneer (BS) based on the observed realization of the system
state at each MS $\boldsymbol{\chi}_k$. The {\em Per-Stage Subband
Auction} among $K$ MSs is as follows:

\begin{itemize}
\item \textbf{Bidding}:  Based on the local observation $\boldsymbol{\chi}_k$, each user $k$ submits his bid $\{\mathcal{Q}^k(\boldsymbol{\chi}_k,\mathbf{s}_k):\forall
\mathbf{s}_k\}$.

\item \textbf{Subband Allocation}: The BS assigns subbands
to achieve the maximum sum bids, i.e.
\begin{align}
\mathbf{s}^*=\Omega_s^*(\boldsymbol{\chi})=\arg \min_{\mathbf{s}}
\sum_k \mathcal{Q}^k(\boldsymbol{\chi}_k,\mathbf{s}_k)
\label{eqn:general-s}
\end{align}
and then broadcasts the allocation results
$\mathbf{s}^*=\{\mathbf{s}^*_k:\forall k\}$ to $K$ users.
\item \textbf{Power Allocation}: Based on the subband allocation result $\mathbf{s}^*_k$, each user $k$ determines the transmit
power, which minimizes the R.H.S. of
\eqref{eqn:Bellman-per-user-q-factor}, i.e.
\begin{align}
&\mathbf{p}_k^*=\Omega_{p_k}^*(\boldsymbol \chi)
g_k(\boldsymbol{\gamma}^k,\boldsymbol{\chi}^i_k,\mathbf{s}_k^*,\mathbf{p}_k)\label{eqn:general-p}\\
=&\arg \min_{\mathbf{p}_k} \Big[+ \sum_{\boldsymbol{\chi}^j_k}
\Pr[\boldsymbol{\chi}^j_k|\boldsymbol{\chi}^i_k,\mathbf{s}_k^*,\mathbf{p}_k]
W^k(\boldsymbol{\chi}^j_k)\Big]-\theta^k\nonumber
\end{align}
\end{itemize}

\begin{Rem}(\emph{Optimal Subband and Power Allocation under Q-factor
Approximation}) In proposed per-stage subband auction, the subband
allocation actions minimize $\sum_k
\mathcal{Q}^k(\boldsymbol{\chi}_k,\mathbf{s}_k)$, and the power
allocation actions at each MS minimizes the R.H.S. of the per-user
subband allocation Q-factor fixed point equation in
\eqref{eqn:Bellman-per-user-q-factor}. Therefore, the proposed
per-stage subband auction achieves the solution of the Bellman
equation in \eqref{eqn:Bellman-q-factor} under the linear Q-factor
approximation in \eqref{eqn:approximate-Q}. ~ \hfill\QED
\end{Rem}

\begin{Rem} (\emph{Computational Complexity and Memory Requirement Reduction at
BS}) With the per-stage subband auction mechanism, the BS does not
need to store the per-user subband allocation Q-factor
$\{\mathcal{Q}^k(\boldsymbol{\chi}_k,\mathbf{s}_k)\}$ ($\forall k$)
and $2K$ LMs for all the MSs, which greatly reduced the memory
requirement at the BS. On the other hand, the BS does not need to
perform power allocation for each MS on each subband $p_{k,n}$
($\forall k,n$), which significantly reduces the computational
complexity at the BS. ~ \hfill\QED
\end{Rem}

\subsection{Online Per-user Primal-Dual  Learning Algorithm via Stochastic Approximation}\label{subsec_online distri_alg}

Since the derived power and subband allocation policies are all
functions of the per-user subband allocation Q-factor and LMs, we
shall propose an online localized learning algorithm to estimate
$\{\mathcal{Q}^k(\boldsymbol{\chi}_k,\mathbf{s}_k)\}$ and LMs
$\gamma^k$ at each MS $k$. For notation convenience, we denote the
per-user state-action combination as
$\varphi\triangleq(\boldsymbol{\chi}_k,\mathbf{s}_k)$ ($\forall k$).
Let $i$ and $j$ ($1\leq i,j \leq I_{\varphi}$) be the dummy indices
enumerating all the per-user state-action combinations of each user
 with cardinality $I_{\varphi}=2^{N_F}
I^k_{\chi}$.  Let $\boldsymbol{\mathcal{Q}}^k\triangleq
\big(\mathcal{Q}^k(\varphi^1),\cdots,
\mathcal{Q}^k(\varphi^{I_{\varphi}})\big)^T$ be the vector of
per-user Q-factor for user k. Let $\varphi_k
(t)\triangleq\big(\boldsymbol{\chi}_k(t),\mathbf{s}_k(t)\big)$ be
the state-action pair observed at MS $k$ at the $t$-th slot, where
$\boldsymbol{\chi}_k(t)=(Q_k(t), \mathbf{H}_k(t))$ is the system
state realization observed at MS $k$. Based on the current
observation $\varphi_k (t)$, user $k$ updates its estimate on the
per-user Q-factor and the LMs according to:
\begin{align}
&\ \mathcal Q^k_{t+1}(\varphi^i)=
       \mathcal Q^k_t(\varphi^i) + \epsilon_{l_k(\varphi^i,t)}^{q}
\big[g_k(\boldsymbol{\gamma}^k_t,\varphi^i,\mathbf p_k(t))\nonumber \\
&\quad \quad \quad \quad \quad \quad
+\widetilde{W}^k_t(Q_k(t+1))\big)
-\big(g_k(\boldsymbol{\gamma}^k_t,\varphi^r,\mathbf p_k(\bar{t}))\nonumber\\
&\quad \quad \quad \quad \quad \quad+ \widetilde{W}^k_t(Q_k(\bar
t+1)) -\mathcal Q^k_t(\varphi^r)\big) -
 \mathcal Q^k_t(\varphi^i) \big]\nonumber\\
& \quad \quad \quad \quad \quad \quad \cdot \mathbf{1}\big[\varphi_k
(t)=\varphi^i \big]
 \label{eqn:Q-est1}\\
&\overline{\gamma}_{t+1}^k=\Gamma\big(\overline{\gamma}^k_t+\epsilon_t^{\gamma}(\sum_{n}p_{k,n}(t)-P_k)\big)
 \label{eqn:gamma-dis-est1}\\
&\underline{\gamma}_{t+1}^k=\Gamma\big(\underline{\gamma}^k_t+\epsilon_t^{\gamma}(\mathbf{1}[Q_k(t)=N_Q]-P_k^d)\big)
 \label{eqn:gamma-dis-est1-under}
\end{align}
where $l_k(\varphi^i,t)\triangleq\sum_{m=0}^t
\mathbf{1}\big[\varphi_k (m)=\varphi^i\big]$ is the number of
updates of $\mathcal Q^k(\varphi^i)$ till
$t$\cite{Borkarasynchronous:1998}, $\mathbf
p_k(t)=\{p_{k,n}(t):\forall n\}$ is the power allocation actions
given be the per-stage auction, $\widetilde{W}^k_t(Q_k)\triangleq
\mathbb{E}[W^k_t(\boldsymbol{\chi}_k)|Q_k]$ with
$W^k_t(\boldsymbol{\chi}_k)=\mathbb{E}\big[
\mathcal{Q}^k_t\big(\boldsymbol{\chi}_k,\{s_{k,n}=\mathbf{1}[|H_{k,n}|\geq
H^*_{K-1}]\}\big)|\boldsymbol{\chi}_k \big] $, $\bar{t}\triangleq
\sup\{t:\varphi_k(t)=\varphi^r\}$, $\varphi^r$ is the reference
per-user state-action combination\footnote{Without loss of
generality, we initialize  the per-user subband allocation Q-factor
as 0, i.e. $\mathcal Q^k_0(\varphi^r)=0\ \forall k$.},
$\Gamma(\cdot)$ is the projection onto an interval $[0,B]$ for some
$B>0$ and $\{\epsilon_t^{q}\},\{\epsilon_t^{\gamma}\}$ are the step
size sequences satisfying the following conditions:
\begin{align}
\sum_t\epsilon_t^{q}=\infty, &\epsilon_t^{q}\geq 0, \epsilon_t^{q}
\rightarrow 0 ,\sum_t\epsilon_t^{\gamma}=\infty,
\epsilon_t^{\gamma}\geq 0,\epsilon_t^{\gamma} \rightarrow 0,\nonumber\\
&\sum_t\big(({\epsilon_t^{q}})^2+2({\epsilon_t^{\gamma}})^2 \big) <
\infty, \frac{\epsilon_t^{\gamma}}{\epsilon_t^{q}}\rightarrow 0
\label{stepsize}
\end{align}

The above distributive per-user potential learning algorithm
requires knowledge on local QSI and local CSI only.

\begin{Rem}[Comparison to the Deterministic NUM]
In conventional iterative solutions for deterministic
NUM\cite{PalomarMungdecomposition:2006}, the iterative updates (with
message exchange)\footnote{Since the iterations within a CSI
coherence time involve explicit message passing, there is processing
and signaling overhead per iteration and this limits the total
number of iterations within a CSI coherence time.} are performed
within the CSI coherence time and hence, this limits the number of
iterations and the performance. However, in the proposed online
algorithm,  the updates evolves in the same time scale as the CSI
and QSI. Hence, it could converge to a better solution because the
number of iterations is no longer limited by the coherence time of
CSI.~ \hfill\QED
\end{Rem}

\begin{Rem}(\emph{Comparison to the Conventional Reinforced Learning})
There are two key novelties in the proposed per-user online update
algorithms. Firstly, most of the existing literature regarding
online learning addressed unconstrained MDP only\cite{Cao:2007}. In
the case of CMDP, the LM are determined offline by
simulation\cite{Djoin:2007}. In our case, both the LM and the
per-user Q-factor are updated simultaneously. Secondly, conventional
online learning are designed for centralized solution where the
control actions are determined entirely from the potential or
Q-factor update. However, in our case, the control actions for user
$k$ are determined from $\{\mathcal Q^k(\varphi)\}$ ($\forall k$)
via a per-stage auction. During the iterative updates, both the
per-user Q-factor/LMs as well as the control actions are changed
dynamically and the existing convergence results (based on
contraction mapping argument) cannot be applied directly to our
distributive stochastic learning algorithm.~ \hfill\QED
\end{Rem}

\subsection{Convergence Analysis}\label{subsec_convergence distri_alg}




In this section, we shall establish technical conditions for the
almost-sure convergence of the online distributive learning
algorithm.    For any LM $\boldsymbol{\gamma}$ $(\gamma^k \geq 0)$,
define a vector mapping $\mathbf{T}^k: R^2\times
R^{I_{\varphi}}\rightarrow R^{I_{\varphi}}$ for user $k$, and
$\mathbf{T}^k\triangleq(T^k_1,\cdots,T^k_{I_{\varphi}})^T$ with the
$i$-th ($1\leq i \leq I_{\varphi}$) component mapping defined as
$T_i^k(\gamma^k,\boldsymbol{\mathcal{Q}}^k)\triangleq\min_{\mathbf{p}_k}
\Big[ g_k(\boldsymbol{\gamma}^k,\varphi^i,\mathbf{p}_k) +
\sum_{\varphi^j} \Pr[\varphi^j|\varphi^i,\mathbf{p}_k]
\mathcal{Q}^k(\varphi^j)\Big]$, where $
\Pr[\varphi^j|\varphi^i,\mathbf{p}_k]=\Pr[\boldsymbol{\chi}^j_k,
\mathbf{s}^j_k|\varphi^i,\mathbf{p}_k]=\Pr[\boldsymbol{\chi}^j_k|\varphi^i,\mathbf{p}_k]\Pr[\mathbf{s}^j_k|\boldsymbol{\chi}^j_k]
=\Pr[\boldsymbol{\chi}^j_k|\varphi^i,\mathbf{p}_k]\prod_n
\Pr[s^j_{k,n}(|H^j_{k,n}|\geq H^*_{K-1})+(1-s^j_{k,n})(|H^j_{k,n}|<
H^*_{K-1})|H^j_{k,n}]$. Define
\begin{align}
&\mathbf{A}_{t-1}^k\triangleq
P^k_t\epsilon_{t-1}^v+(1-\epsilon_{t-1}^v) \mathbf{I},\nonumber\\
&\mathbf{B}_{t-1}^k\triangleq
P^k_{t-1}\epsilon_{t-1}^v+(1-\epsilon_{t-1}^v) \mathbf{I}
\end{align}
where $P^k_t$ is the $I_{\varphi}\times I_{\varphi}$ transition
probability matrix with $\Pr[\varphi^j|\varphi^i,\mathbf{p}^k_t(i)]$
as its $(i,j)$-element, where $\mathbf{p}^k_t(i)$ denotes the power
allocation for $\varphi^i$ obtained by per-stage subband auction at
the $t$-th iteration, and $\mathbf{I}$ is the $I_{\varphi}\times
I_{\varphi}$ identity matrix.

Since we have two different step size sequences
$\{\epsilon_t^{\gamma}\}$ and $\{\epsilon_t^{q}\}$ and
$\epsilon_t^{\gamma} = \boldsymbol{o}(\epsilon_t^{q})$, the LM
updates and the per-user Q-factor updates are done simultaneously
but over two different time scales. During the per-user  Q-factor
update (timescale I), we have
$\overline{\gamma}_{t+1}^k-\overline{\gamma}^k_t=e(t)$ and
$\underline{\gamma}_{t+1}^k-\underline{\gamma}^k_t=e(t)$ $(\forall k
)$, where
$e(t)=\mathcal{O}(\epsilon_t^{\gamma})=\boldsymbol{o}(\epsilon_t^{q})$.
Therefore, the LM appears to be quasi-static \cite{Borkarbook:2008}
during the per-user Q-factor update in \eqref{eqn:Q-est1}. We first
have the following lemma.

\begin{Lem}(\emph{Convergence of Per-user Q-factor Learning over  Timescale
I}) Assume for all the feasible policies $\Omega$ in the policy
space, there exists a $\delta_m=\mathcal{O}(\epsilon_m^q)>0$ and
some positive integer $m$ such that
\begin{eqnarray}
[\mathbf{A}_{m}^k\cdots\mathbf{A}_{1}^k]_{i r}\geq \delta_m, \
[\mathbf{B}_{m}^k\cdots\mathbf{B}_{1}^k]_{i r}\geq \delta_m, \quad
1\leq i \leq I_{\varphi}\label{promatrix-cond-dis}
\end{eqnarray}
where $[\cdot]_{i r}$ denotes the element of the $i$-th row with
$r$-th column of the corresponding $I_{\varphi}\times I_{\varphi}$
matrix ($r$ is the column index in $P_t^k$ which contains the
aggregate reference state $\varphi^r$). For stepsize sequence
$\{\epsilon_t^{q}\},\{\epsilon_t^{\gamma}\}$ satisfying the
conditions in \eqref{stepsize}, we have $\lim_{t\rightarrow
\infty}\boldsymbol{\mathcal{Q}}^k_t=\boldsymbol{\mathcal{Q}}^k_{\infty}(\boldsymbol{\gamma})$
$\forall k$ a.s. for any initial per-user subband allocation
Q-factor vector $\boldsymbol{\mathcal{Q}}^k_0$ and LM
$\boldsymbol{\gamma}$, where
the converged per-user subband allocation Q-factor
$\boldsymbol{\mathcal{Q}}^k_{\infty}(\boldsymbol{\gamma})$
satisfies:
\begin{align}
\big(T_r^k(\gamma^k,\boldsymbol{\mathcal{Q}}^k_{\infty}(\boldsymbol{\gamma}))-\mathcal{Q}^k_{\infty}(\varphi^r)
\big)\mathbf{e}+\boldsymbol{\mathcal{Q}}^k_{\infty}(\boldsymbol{\gamma})
=\mathbf{T}^k(\gamma^k,\boldsymbol{\mathcal{Q}}^k_{\infty}(\boldsymbol{\gamma}))
\label{bellmaneqn-wp1-dis-fixed-gamma}
\end{align}\label{Lem:convergence-dis-fixed-gamma}
\end{Lem}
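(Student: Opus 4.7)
The plan is to invoke Borkar's two-timescale asynchronous stochastic approximation theory. Because $\epsilon_t^{\gamma}=\boldsymbol{o}(\epsilon_t^{q})$, on the fast per-user Q-factor timescale the LM $\boldsymbol{\gamma}_t$ moves negligibly and may be treated as a frozen parameter $\boldsymbol{\gamma}$. I would first recast \eqref{eqn:Q-est1} into Robbins--Monro form by taking conditional expectations against the natural filtration generated by the history of $(\mathbf{H}(t),\mathbf{Q}(t))$. Writing $\varphi^i$ for the per-user state-action updated at step $t$, the drift at component $i$ reads
\begin{equation*}
F_i^k(\gamma^k,\boldsymbol{\mathcal{Q}}^k) = T_i^k(\gamma^k,\boldsymbol{\mathcal{Q}}^k) - \bigl(T_r^k(\gamma^k,\boldsymbol{\mathcal{Q}}^k) - \mathcal{Q}^k(\varphi^r)\bigr) - \mathcal{Q}^k(\varphi^i),
\end{equation*}
and the remainder is a square-integrable martingale-difference noise whose contribution vanishes almost surely thanks to $\sum_t (\epsilon_t^q)^2<\infty$.

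Since only component $\varphi_k(t)$ is updated at step $t$, the iteration is asynchronous. Under the step-size schedule in \eqref{stepsize} and comparable visit frequencies (supplied by ergodicity of the induced Markov chain and the local counter $l_k(\varphi^i,t)$), the asynchronous stochastic approximation theorem of \cite{Borkarasynchronous:1998} lets me replace the discrete iteration, after a time rescaling that absorbs the asymptotic visit frequencies, by the continuous-time ODE
\begin{equation*}
\dot{\boldsymbol{\mathcal{Q}}}^k(\tau) = \mathbf{T}^k\bigl(\gamma^k,\boldsymbol{\mathcal{Q}}^k(\tau)\bigr) - \bigl(T_r^k(\gamma^k,\boldsymbol{\mathcal{Q}}^k(\tau)) - \mathcal{Q}^k(\varphi^r)\bigr)\mathbf{e} - \boldsymbol{\mathcal{Q}}^k(\tau).
\end{equation*}
Every equilibrium of this ODE coincides with a solution of the relative Bellman equation \eqref{bellmaneqn-wp1-dis-fixed-gamma}, so the lemma reduces to showing the equilibrium is unique and globally asymptotically stable and that the iterates are almost-surely bounded.

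The crux is the stability statement. Relative value iteration is only nonexpansive in the sup-norm, so the standard Banach contraction route fails. Instead I would linearize: the difference of two trajectories of the ODE evolves under matrix products of exactly the form $\mathbf{A}_m^k\cdots\mathbf{A}_1^k$ and $\mathbf{B}_m^k\cdots\mathbf{B}_1^k$, both of which are stochastic matrices because each factor is a convex combination of a Markov kernel $P_t^k$ and the identity. The hypothesis \eqref{promatrix-cond-dis} is a Doeblin-type minorization guaranteeing that every $m$-fold product has a uniformly positive entry $\delta_m$ in the reference column $r$. Combined with pinning $\mathcal{Q}^k(\varphi^r)$ via the reference-subtraction term in \eqref{eqn:Q-est1}, this upgrades nonexpansiveness in the span semi-norm to strict contraction in sup-norm at a uniform geometric rate, yielding a unique, globally asymptotically stable equilibrium; boundedness of the iterates follows along the same linearized argument once one notes that the per-stage cost $g_k$ is bounded on the compact state-action set. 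Applying the asynchronous SA theorem then delivers $\boldsymbol{\mathcal{Q}}^k_t \to \boldsymbol{\mathcal{Q}}^k_\infty(\boldsymbol{\gamma})$ a.s.\ for every initial condition. The main obstacle is precisely this stability step: turning the minorization \eqref{promatrix-cond-dis}, which must hold \emph{uniformly} over the time-varying power-allocation policy $\mathbf{p}_t^k(\cdot)$ produced by the per-stage auction, into a rate-uniform contraction of the ODE flow.
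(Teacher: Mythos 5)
Your proposal is correct and follows essentially the same route as the paper's Appendix B: freeze the LM on the fast timescale, eliminate the martingale-difference noise via the square-summable stepsizes, and extract a span-seminorm contraction from the minorization condition on the products $\mathbf{A}_{m}^k\cdots\mathbf{A}_{1}^k$ and $\mathbf{B}_{m}^k\cdots\mathbf{B}_{1}^k$, with the pinned reference component $\mathcal{Q}^k(\varphi^r)$ converting span convergence into sup-norm convergence. The only presentational difference is that the paper applies the sandwich/contraction argument directly to the discrete residuals $\mathbf{R}^k_t$ (showing they vanish geometrically), rather than to differences of trajectories of the limiting ODE.
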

\begin{proof}
Please refer to Appendix B.
\end{proof}

%

On the other hand, during the LM update (timescale II), we have
$\lim_{t\rightarrow
\infty}\|\boldsymbol{\mathcal{Q}}^k_t-\boldsymbol{\mathcal{Q}}^k_{\infty}(\boldsymbol{\gamma}_t)\|=0$
w.p.1. by the Corollary 2.1 of \cite{Borkartwotimescales:1997}.
Hence, during the LM updates in \eqref{eqn:gamma-dis-est1} and
\eqref{eqn:gamma-dis-est1-under}, the per-user subband allocation
Q-factor update is seen as almost equilibrated. The convergence of
the LM is summarized below.

\begin{Lem}[Convergence of the LM over Timescale II]
The iterates $\lim_{t\to \infty}
\boldsymbol{\gamma}_t=\boldsymbol{\gamma}_{\infty}$ a.s., where
$\boldsymbol{\gamma}_{\infty}$ satisfies the power and packet drop
rate constraints in \eqref{eqn:tx-pwr2} and \eqref{eqn:ptdrop2}. ~
\hfill\QED
 \label{Lem:convergence-dis-gamma}
\end{Lem}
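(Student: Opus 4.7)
The plan is to exploit the two-timescale separation already in place: by the hypothesis $\epsilon_t^{\gamma}=\boldsymbol{o}(\epsilon_t^{q})$ and the Corollary 2.1 of \cite{Borkartwotimescales:1997} already cited, from the viewpoint of the slow LM iteration the per-user Q-factors may be treated as fully equilibrated at $\boldsymbol{\mathcal{Q}}^k_{\infty}(\boldsymbol{\gamma}_t)$. Consequently the power actions $\mathbf{p}_k(t)$ produced by the per-stage auction/power selection, and the queue indicator $\mathbf{1}[Q_k(t)=N_Q]$, are (to leading order) those generated by the stationary policy $\Omega^{*}(\boldsymbol{\gamma}_t)$ attaining the minimum in the unconstrained MDP \eqref{uncons-MDP}, driving the ergodic Markov chain $\{\boldsymbol{\chi}(t)\}$ whose stationary distribution is $\pi_{\chi}^{\boldsymbol{\gamma}_t}$.

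Next I would recast \eqref{eqn:gamma-dis-est1}--\eqref{eqn:gamma-dis-est1-under} as a projected stochastic approximation
\begin{equation*}
\boldsymbol{\gamma}_{t+1}=\Gamma\!\left(\boldsymbol{\gamma}_t+\epsilon_t^{\gamma}\bigl(h(\boldsymbol{\gamma}_t)+\delta M_{t+1}+\eta_t\bigr)\right),
\end{equation*}
where the componentwise mean field $h(\boldsymbol{\gamma})$ has entries $\mathbb{E}_{\pi_{\chi}^{\boldsymbol{\gamma}}}[\sum_n p_{k,n}]-P_k$ and $\mathbb{E}_{\pi_{\chi}^{\boldsymbol{\gamma}}}[\mathbf{1}[Q_k=N_Q]]-P_k^d$, $\delta M_{t+1}$ is the martingale difference capturing the deviation of the instantaneous driving term from its conditional stationary mean, and $\eta_t\to 0$ is the bias coming from the not-quite-equilibrated Q-factors. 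Using Danskin's envelope theorem applied to $G(\boldsymbol{\gamma})=\min_{\Omega}L_{\beta}(\Omega,\boldsymbol{\gamma})$ in \eqref{uncons-MDP}, the vector $h(\boldsymbol{\gamma})$ coincides with the (super-)gradient of the concave dual $G$, so the iterates track the projected ODE $\dot{\boldsymbol{\gamma}}=\nabla G(\boldsymbol{\gamma})$ constrained to $[0,B]^{2K}$. The Kushner--Clark lemma (equivalently, Chapter~5 of \cite{Borkarbook:2008}) then delivers almost sure convergence to the set of stationary points of this ODE. Concavity of $G$ plus the saddle-point relation \eqref{saddle-pt-cond} identify this stationary point with the dual optimum $\boldsymbol{\gamma}^{*}$, and the KKT/complementary-slackness conditions read off from $\nabla G(\boldsymbol{\gamma}^{*})=0$ are precisely the average power and packet-drop constraints in \eqref{eqn:tx-pwr2} and \eqref{eqn:ptdrop2}.

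The hard part, and the step that deserves the most care, is justifying that $h(\boldsymbol{\gamma}_t)$ can indeed be extracted as the running conditional mean of the driving term. The issue is that the noise is Markovian, not i.i.d.: the quantities $\sum_n p_{k,n}(t)$ and $\mathbf{1}[Q_k(t)=N_Q]$ depend on $\boldsymbol{\chi}(t)$, whose marginal law is only asymptotically $\pi_{\chi}^{\boldsymbol{\gamma}_t}$. One must therefore invoke a Markov-noise averaging result (e.g.\ Theorem~9 of Chapter~6 in \cite{Borkarbook:2008}) and verify that the controlled chain is uniformly ergodic in $\boldsymbol{\gamma}$ over $[0,B]^{2K}$, so that its mixing time is dominated by $1/\epsilon_t^{\gamma}$ and the slow drift of $\boldsymbol{\gamma}_t$ does not spoil the averaging.

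Finally, a minor but necessary point is to rule out pathological limit behaviour at the boundary of $[0,B]^{2K}$ introduced by the projection $\Gamma(\cdot)$: it suffices to choose $B$ large enough that the feasibility assumption underlying Problem~\ref{Prob2} (and Lemma~\ref{Lem:bellman-eqn-Q-factor}) places $\boldsymbol{\gamma}^{*}$ strictly in the interior, so that the projection is inactive at the limit and the ODE reduces to the unconstrained gradient flow $\dot{\boldsymbol{\gamma}}=\nabla G(\boldsymbol{\gamma})$ in a neighborhood of $\boldsymbol{\gamma}_{\infty}$.
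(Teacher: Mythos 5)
Your proposal is correct and follows essentially the same route as the paper's Appendix C: treat the Q-factors as equilibrated on the slow timescale via \cite{Borkartwotimescales:1997}, show the LM iterates track the ODE $\dot{\boldsymbol{\gamma}}=\nabla G(\boldsymbol{\gamma})$ by an envelope/Danskin argument (the paper carries out the chain-rule computation explicitly, using that the discrete subband policy is locally constant in $\boldsymbol{\gamma}$ and that $\Omega_p^*$ is a minimizer so the indirect terms vanish), and conclude convergence to a zero of the gradient, i.e.\ to the expected power and packet-drop constraints. Your added care about Markov-noise averaging and the inactivity of the projection at the limit supplies rigor that the paper leaves implicit, but the underlying argument is the same.
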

\begin{proof}
Please refer to Appendix C.
\end{proof}

Based on the above lemmas, we shall summarize the convergence
performance of the online per-user Q-factor and LM learning
algorithm in the following theorem.

\begin{Thm}
[Convergence of Online Per-user Learning Algorithm] For the same
conditions as in Lemma \ref{Lem:convergence-dis-fixed-gamma},  we
have $(\boldsymbol{\mathcal{Q}}^k_t, \gamma_t^k)\longrightarrow
(\boldsymbol{\mathcal{Q}}^k_{\infty}, \gamma^k_{\infty})$ a.s.
$\forall k $, where
$\boldsymbol{\mathcal{Q}}^k_{\infty}(\boldsymbol{\gamma}_{\infty})$
and $\boldsymbol{\gamma}_{\infty}$ satisfy
\begin{eqnarray}
\big(T_r^k(\gamma^k_{\infty},\boldsymbol{\mathcal{Q}}^k_{\infty})-\mathcal{Q}^k_{\infty}(\varphi^r)
\big)\mathbf{e}+\boldsymbol{\mathcal{Q}}^k_{\infty}
=\mathbf{T}^k(\gamma^k_{\infty},\boldsymbol{\mathcal{Q}}^k_{\infty})
\label{bellmaneqn-wp1-dis-optimal-gamma}
\end{eqnarray}
and $\boldsymbol{\gamma}_{\infty}$ satisfies the power and packet
drop rate constraints in \eqref{eqn:tx-pwr2} and
\eqref{eqn:ptdrop2}. ~ \hfill\QED
 \label{thm:convergence-dis}
\end{Thm}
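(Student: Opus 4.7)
The plan is to combine the two lemmas through the standard two timescale stochastic approximation argument of Borkar, exploiting the stepsize condition $\epsilon_t^{\gamma}=\boldsymbol{o}(\epsilon_t^{q})$ that separates the Q-factor recursion (fast timescale) from the LM recursion (slow timescale). The structure of the proof is therefore: (i) characterise the fast-timescale limit via Lemma \ref{Lem:convergence-dis-fixed-gamma}; (ii) invoke the tracking result of Borkar to show the fast iterate follows the equilibrium map $\boldsymbol{\gamma}\mapsto\boldsymbol{\mathcal{Q}}_\infty^k(\boldsymbol{\gamma})$; (iii) characterise the slow-timescale limit via Lemma \ref{Lem:convergence-dis-gamma}; (iv) glue the two limits together by continuity.

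Concretely, first I would fix any sample path in the probability-one set on which the conclusions of Lemma \ref{Lem:convergence-dis-fixed-gamma} and Lemma \ref{Lem:convergence-dis-gamma} both hold. On the fast timescale, $\boldsymbol{\gamma}_t$ is essentially frozen because the increment $\boldsymbol{\gamma}_{t+1}-\boldsymbol{\gamma}_t=\mathcal{O}(\epsilon_t^{\gamma})=\boldsymbol{o}(\epsilon_t^{q})$, so Lemma \ref{Lem:convergence-dis-fixed-gamma} yields $\boldsymbol{\mathcal{Q}}^k_t\to \boldsymbol{\mathcal{Q}}^k_{\infty}(\boldsymbol{\gamma})$ a.s.\ for each frozen $\boldsymbol{\gamma}$. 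Applying Corollary 2.1 of \cite{Borkartwotimescales:1997} to the coupled iterations then gives the tracking property
\begin{equation}
\lim_{t\to\infty}\bigl\|\boldsymbol{\mathcal{Q}}^k_t-\boldsymbol{\mathcal{Q}}^k_{\infty}(\boldsymbol{\gamma}_t)\bigr\|=0 \quad \text{a.s.}
\end{equation}
On the slow timescale, this tracking lets us view $\boldsymbol{\mathcal{Q}}^k_t$ as equilibrated in the LM recursions \eqref{eqn:gamma-dis-est1}--\eqref{eqn:gamma-dis-est1-under}, and Lemma \ref{Lem:convergence-dis-gamma} then delivers $\boldsymbol{\gamma}_t\to\boldsymbol{\gamma}_{\infty}$ a.s., with $\boldsymbol{\gamma}_{\infty}$ satisfying the average power constraint \eqref{eqn:tx-pwr2} and packet drop constraint \eqref{eqn:ptdrop2}.

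It remains to pass to the joint limit. The continuous dependence of $\boldsymbol{\mathcal{Q}}^k_{\infty}(\boldsymbol{\gamma})$ on $\boldsymbol{\gamma}$ follows from the fact that the per-user mapping $\mathbf{T}^k(\gamma^k,\cdot)$ in Lemma \ref{Lem:convergence-dis-fixed-gamma} depends continuously (in fact affinely) on $\gamma^k$ through $g_k$, and the fixed point equation \eqref{bellmaneqn-wp1-dis-fixed-gamma} is stable under the sub-stochastic contraction property guaranteed by the positivity condition \eqref{promatrix-cond-dis}. Combining the tracking result with $\boldsymbol{\gamma}_t\to\boldsymbol{\gamma}_{\infty}$ and the continuity of $\boldsymbol{\gamma}\mapsto\boldsymbol{\mathcal{Q}}^k_{\infty}(\boldsymbol{\gamma})$ gives
\begin{equation}
\boldsymbol{\mathcal{Q}}^k_t\to \boldsymbol{\mathcal{Q}}^k_{\infty}(\boldsymbol{\gamma}_{\infty})\triangleq\boldsymbol{\mathcal{Q}}^k_{\infty} \quad \text{a.s.}
\end{equation}
Passing to the limit in \eqref{bellmaneqn-wp1-dis-fixed-gamma} at $\boldsymbol{\gamma}=\boldsymbol{\gamma}_{\infty}$ yields the claimed relation \eqref{bellmaneqn-wp1-dis-optimal-gamma}.

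The main obstacle I anticipate is justifying the continuity (or at least upper semi-continuity) of the equilibrium map $\boldsymbol{\gamma}\mapsto\boldsymbol{\mathcal{Q}}^k_{\infty}(\boldsymbol{\gamma})$ rigorously, because the inner minimization over $\mathbf{p}_k$ in $\mathbf{T}^k$ can, in principle, produce a set-valued argmin; care is needed to invoke Berge's theorem or an envelope-type argument, together with the compactness of the bounded LM projection set $[0,B]$ used in \eqref{eqn:gamma-dis-est1}--\eqref{eqn:gamma-dis-est1-under}. Once this continuity is in place, the rest of the argument is the standard Borkar two-timescale plug-in.
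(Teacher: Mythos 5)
Your proposal is correct and follows essentially the same route as the paper: the paper offers no separate appendix proof for this theorem, but obtains it exactly as you do, by combining Lemma \ref{Lem:convergence-dis-fixed-gamma} (fast timescale) and Lemma \ref{Lem:convergence-dis-gamma} (slow timescale) through the two-timescale separation $\epsilon_t^{\gamma}=\boldsymbol{o}(\epsilon_t^{q})$ and the tracking property $\lim_{t\to\infty}\|\boldsymbol{\mathcal{Q}}^k_t-\boldsymbol{\mathcal{Q}}^k_{\infty}(\boldsymbol{\gamma}_t)\|=0$ from Corollary 2.1 of \cite{Borkartwotimescales:1997}. Your additional attention to the continuity of the equilibrium map $\boldsymbol{\gamma}\mapsto\boldsymbol{\mathcal{Q}}^k_{\infty}(\boldsymbol{\gamma})$ is a point the paper passes over silently, so it is a welcome refinement rather than a divergence.
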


%


\section{Application to the OFDMA systems with Exponential Packet Size Distribution} \label{sec_application}
In this section, we shall illustrate the application of the proposed
stochastic learning algorithm by an example with exponential packet
size distribution.

\subsection{Dynamics of System State under Exponential Distributed
Packet Size}

Let $\mathbf{A}(t)=\big(A_1(t),\cdots,A_K(t)\big)$ and
$\mathbf{N}(t)=\big(N_1(t),\cdots,N_K(t)\big)$ be be the random new
packet arrivals and the packet sizes for the K users at the $t$-th
scheduling slot, respectively.
$\mathbf{Q}(t)=\big(Q_1(t),\cdots,Q_K(t)\big)$ and $N_Q$ denote the
joint QSI (number of packets) at the end of the $t$-th scheduling
slot and the maximum buffer size (number of packets).

\begin{Asump}
The arrival process $A_k(t)$ is i.i.d. over scheduling slots
according to a general distribution $\Pr(A_k)$ with average arrival
rate $\mathbb{E}[A_k]=\lambda_k$. The random packet size $N_k(t)$ is
i.i.d. over scheduling slots  following an exponential distribution
with mean packet size $\overline{N}_k$.~ \hfill\QED
\label{Asump:A-N}
\end{Asump}

Given a stationary policy, define the conditional mean departure
rate of packets of user $k$ at the $t$-th slot (conditioned on
$\boldsymbol{\chi}(t)$) as
$\mu_k(\boldsymbol{\chi}(t))=R_k(\boldsymbol{\chi}(t))/
\overline{N}_k$.
\begin{Asump}
The slot duration $\tau$ is sufficiently small compared with the
average packet service time, i.e. $\mu_k(\boldsymbol{\chi}(t)) \tau
\ll 1$\footnote{This assumption is reasonable in practical systems.
For instance, in the UL WiMAX (with multiple UL users served
simultaneously), the minimum resource block that could be allocated
to a user in the UL is $8 \times 16$ symbols $-$ 12 pilot
symbols$=$116 symbols. Even with 64QAM and rate $\frac{1}{2}$
coding, the number of payload bits it can carry is
$116\times3$bits$=$348 bits. As a result, when there are a lot of UL
users sharing the WiMAX AP, there could be cases that the MPEG4
packet (around 10K bits) from an UL user cannot be delivered in one
frame. In addition, the delay requirement of MPEG4 is 500ms or more,
while the frame duration of Wimax is 5ms. Hence, it is not necessary
to serve one packet during one scheduling slot so that the scheduler
has more flexibility in allocating resource. Therefore, in practical
systems, an application level packet may have mean packet length
spanning over many time slots (frames) and this assumption is also
adopted in
\cite{Sadiq:2009,Baris:2009,Jennifer:2009,Crabill:1972}.}.~
\hfill\QED \label{Asump:tau}
\end{Asump}
Given the current system state $\boldsymbol{\chi}(t)$ and the
control action, and conditioned on the packet arrival
$\mathbf{A}(t)$ at the end of the $t$-th slot, there will be a
packet departure of the $k$-th user at the $(t+1)$-th slot if the
remaining service time of a packet is less than the current slot
duration $\tau$. By the memoryless property of the exponential
distribution, the remaining packet length (also denoted as $\mathbf
N(t)$) at any slot $t$ is also exponential distributed. Hence, the
transition probability to $Q_k(t+1)$ at the $(t+1)$-th slot
corresponding to a packet departure event is given by\footnote{Since
$N_k(t)$ is exponentially distributed and is memoryless, we have the
probability in \eqref{tran-dep-1} (conditioned on the current state
$\boldsymbol{\chi}(t)$ and the associated action
$\Omega(\boldsymbol{\chi}(t))$ ) independent of the previous states
$\{\boldsymbol{\chi}(t-1),\boldsymbol{\chi}(t-2),\cdots \}$.}:
\begin{align}
&\Pr[Q_k(t+1) = A_k(t) + Q_k(t) - 1 |
\boldsymbol{\chi}(t),\mathbf{A}(t),\Omega(\boldsymbol{\chi}(t))]
\nonumber\\
=& \Pr \Big[ \frac{N_k(t)}{R_k(t)} < \tau |
\boldsymbol{\chi}(t),\mathbf{A}(t),\Omega(\boldsymbol{\chi}(t))
\Big] \nonumber
\\
=& \Pr \Big[\frac{N_k(t)}{\overline{N_k}} <
\mu_k(\boldsymbol{\chi}(t))\tau \Big]\nonumber\\
=& 1 - \exp (-\mu_k(\boldsymbol{\chi}(t))\tau) \approx
\mu_k(\boldsymbol{\chi}(t)) \tau \label{tran-dep-1}
\end{align}
where the last equality is due to Assumption \ref{Asump:tau}.  Note
that the probability for simultaneous departure of two or more
packets from the same queue or different queues in a slot is
$\mathcal{O}((\mu_k(\boldsymbol{\chi}(t)) \tau)^2)$, which is
asymptotically negligible. Therefore, the vector queue dynamics is
Markovian with the transition probability given by
\begin{align}
&\Pr[\mathbf{Q}(t+1)|
\boldsymbol{\chi}(t),\Omega(\boldsymbol{\chi}(t))]\nonumber\\
=&\sum_k
\Pr[\mathbf{A}(t)=\mathbf{Q}(t+1)-\mathbf{Q}(t)+\mathbf{e}_k]
\mu_k(\boldsymbol{\chi}(t))
\tau\nonumber\\
&+\Pr[\mathbf{A}(t)=\mathbf{Q}(t+1)-\mathbf{Q}(t)]\big(1-\sum_k\mu_k(\boldsymbol{\chi}(t))
\tau\big) \label{vec-q-trans-prob}
\end{align}
where $\mathbf{e}_k$ denotes the standard basis vector with 1 for
its $k$-th component and 0 for every other component.

\subsection{Decomposition of the Per-user Subband Allocation Q-factor}\label{subsec_decomposition}

In the following lemma, we shall show that the per-user subband
allocation Q-factor
$\mathcal{Q}^k(\boldsymbol{\chi}_k,\mathbf{s}_k)$ can be further
decomposed into the sum of per-user per-subband Q-factor, which
further simplifies the learning algorithm.


\begin{Lem}[Decomposition of Per-user Q-factor]
The {\em per-user  Q-factor}
$\mathcal{Q}^k(\boldsymbol{\chi}_k,\mathbf{s}_k)$  (defined by the
fixed point equation in \eqref{eqn:Bellman-per-user-q-factor}) can
be decomposed into the sum of the {\em per-user per-subband
Q-factor} $\{q^k(Q,|H|,s)\}$, i.e.
$\mathcal{Q}^k(\boldsymbol{\chi}_k,\mathbf{s}_k)=\sum_n
q^k(Q_k,|H_{k,n}|,s_{k,n})$, where
\begin{align}
&q^k(Q_k,|H_{k,n}|,s_{k,n}) \triangleq
\min_{p_{k,n}}\big\{g_{k,n}(\boldsymbol{\gamma}^k,Q_k,|H_{k,n}|,s_{k,n},p_{k,n})\nonumber\\
& \quad \quad \quad \quad \quad \quad \quad-\frac{N_F\delta
\widetilde{w}^k(Q_k)\tau}{\overline{N_k}}s_{k,n}\log(1 +
p_{k,n}|H_{k,n}|^2)
\nonumber\\
&\quad \quad \quad \quad \quad\quad \quad+\mathbb{E}[\widetilde{w}^k(Q_k+A_k)|Q_k]-\frac{\theta^k}{N_F}\big\}\label{eqn:per-user-per-subb-q-factor}\\
&g_{k,n}(\boldsymbol{\gamma}^k,Q_k,|H_{k,n}|,s_{k,n},p_{k,n})=\overline{\gamma}^k
p_{k,n}+\frac{1}{N_F}\big(\beta_k
f(Q_k)\nonumber\\
&\quad \quad \quad \quad \quad \quad \quad -\overline{\gamma}^k
P_k+\underline{\gamma}^k(\mathbf{1}[Q_k=N_Q]-P_k^d)\big)\\
&\widetilde{w}^k(Q_k)=\mathbb{E}\Big[ q^k\big(Q_k,
|H_{k,n}|,s_{k,n}=\mathbf{1}[|H_{k,n}|\geq H^*_{K-1}]\big)|Q_k
\Big]\\
&\delta\widetilde{w}^k(Q_k)=\mathbb{E}[\widetilde{w}^k(Q_k+A_k)-\widetilde{w}^k(Q_k+A_k-1)|Q_k]
\end{align}
Furthermore, we have $\widetilde{W}^k(Q_k)=N_F
\widetilde{w}^k(Q_k)$.~ \hfill\QED
\label{Lem:reduced-MDP-opt-control-per-user-per-subband-q-factor}
\end{Lem}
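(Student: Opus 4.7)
The plan is to verify the stated decomposition by the ansatz $\mathcal{Q}^k(\boldsymbol{\chi}_k,\mathbf{s}_k) = \sum_n q^k(Q_k,|H_{k,n}|,s_{k,n})$ and substitute it directly into the per-user fixed point equation (\ref{eqn:Bellman-per-user-q-factor}). The key enabling facts are that (i) the per-stage cost $g_k$ is additively separable in $p_{k,n}$, and (ii) under the exponential packet size distribution together with Assumption \ref{Asump:tau}, the queue transition probability in (\ref{vec-q-trans-prob}) keeps only the zero-th and first-order terms in $\mu_k\tau$, so the coupling across subbands enters the Bellman recursion only through the scalar factor $\mu_k\tau = (\tau/\overline{N_k})\sum_n s_{k,n}\log(1+p_{k,n}|H_{k,n}|^2)$, which is itself a sum over $n$.

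First, I would evaluate the conditional future expectation in the Bellman operator. Since $\mathbf{H}_k(t+1)$ is independent of $\boldsymbol{\chi}_k(t)$ by Assumption \ref{Asump:H}, I can replace $W^k(\boldsymbol{\chi}_k^j)$ by $\widetilde{W}^k(Q_k^j)$ after taking the CSI expectation. Combining with the one-step queue transition and discarding $\mathcal{O}((\mu_k\tau)^2)$ terms gives
\begin{align*}
\sum_{\boldsymbol{\chi}_k^j}\Pr[\boldsymbol{\chi}_k^j|\boldsymbol{\chi}_k^i,\mathbf{s}_k,\mathbf{p}_k]W^k(\boldsymbol{\chi}_k^j)
=&\ \mathbb{E}[\widetilde{W}^k(Q_k+A_k)|Q_k] \\
&- \mu_k\tau\,\delta\widetilde{W}^k(Q_k),
\end{align*}
where $\delta\widetilde{W}^k(Q_k)\triangleq\mathbb{E}[\widetilde{W}^k(Q_k+A_k)-\widetilde{W}^k(Q_k+A_k-1)|Q_k]$ captures the marginal effect of a single packet departure.

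Next, I would plug the ansatz into $W^k$. Because $\mathcal{Q}^k$ decomposes as $\sum_n q^k(Q_k,|H_{k,n}|,s_{k,n})$ and, given $\boldsymbol{\chi}_k$, the auction indicator $s_{k,n}=\mathbf{1}[|H_{k,n}|\geq H^*_{K-1}]$ depends only on $|H_{k,n}|$ and on an independent copy of the order statistic $H^*_{K-1}$ for each subband, linearity of expectation yields $W^k(\boldsymbol{\chi}_k)=\sum_n\mathbb{E}[q^k(Q_k,|H_{k,n}|,\mathbf{1}[|H_{k,n}|\geq H^*_{K-1}])|\boldsymbol{\chi}_k]$. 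Averaging further over $\mathbf{H}_k$ (whose components are i.i.d. across $n$), all $N_F$ terms become equal, giving $\widetilde{W}^k(Q_k)=N_F\widetilde{w}^k(Q_k)$ and hence $\delta\widetilde{W}^k(Q_k)=N_F\,\delta\widetilde{w}^k(Q_k)$.

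Substituting back into (\ref{eqn:Bellman-per-user-q-factor}), the coefficient of the departure term becomes $-\frac{N_F\delta\widetilde{w}^k(Q_k)\tau}{\overline{N_k}}\sum_n s_{k,n}\log(1+p_{k,n}|H_{k,n}|^2)$. Since $g_k$ contributes $\overline{\gamma}^k\sum_n p_{k,n}$ plus terms depending only on $Q_k$, and $\theta^k$ is a constant, I can distribute the $Q_k$-only terms and the $-\theta^k$ equally as $\frac{1}{N_F}$ per subband to match $g_{k,n}$. The minimization over $\mathbf{p}_k$ then decouples into $N_F$ independent scalar minimizations over $p_{k,n}$, each reproducing the right-hand side of (\ref{eqn:per-user-per-subb-q-factor}). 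Uniqueness (up to a constant) of the fixed point in Lemma \ref{Lem:bellman-eqn-Q-factor}, inherited by the per-user equation, then forces $\mathcal{Q}^k(\boldsymbol{\chi}_k,\mathbf{s}_k)=\sum_n q^k(Q_k,|H_{k,n}|,s_{k,n})$.

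The main obstacle is the bookkeeping around $\widetilde{W}^k$: one must be careful that the subband expectation producing $\widetilde{w}^k(Q_k)$ commutes with the arrival expectation in $\delta\widetilde{W}^k$, and that dropping the $\mathcal{O}((\mu_k\tau)^2)$ tails is consistent with the other approximations already baked into the model by Assumption \ref{Asump:tau}. Once the future expectation is reduced to the form above, the separability of $g_k$ and $\mu_k$ in the subband index $n$ makes the subband-wise decomposition immediate.
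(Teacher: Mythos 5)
Your proposal is correct and follows essentially the same route as the paper: Appendix D defines $q^k$ via the aggregate quantity $\Delta\widetilde{W}^k$ and then derives $W^k=\sum_n w^k$, $\widetilde{W}^k=N_F\widetilde{w}^k$ and $\Delta\widetilde{W}^k=N_F\delta\widetilde{w}^k$ exactly as you do, while the reduction of the future expectation to $\mathbb{E}[\widetilde{W}^k(Q_k+A_k)|Q_k]-\mu_k\tau\Delta\widetilde{W}^k(Q_k)$ and the resulting per-subband separability of the minimization (your third paragraph) is the computation the paper carries out in Appendix E around \eqref{eqn:max-R.H.S.}. Your write-up merely makes explicit the verification-of-ansatz structure and the uniqueness step that the paper leaves implicit.
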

\begin{proof}
Please refer to Appendix D for the proof.
\end{proof}

Based on the {\em per-user per-subband Q-factor} $\{q^k(Q,|H|,s)\}$,
we can obtain the closed-form power allocation actions minimizing
the R.H.S. of the {\em per-user subband allocation Q-factor fixed
point equation} in \eqref{eqn:Bellman-per-user-q-factor}, which is
summarized in the following lemma:

\begin{Lem} [Decentralized Power Control Actions] Given subband allocation actions $\mathbf{s}_k$, the
optimal power control actions of user $k$ under the linear
approximation on subband allocation Q-factor in
\eqref{eqn:approximate-Q} are given by
\begin{align}
p_{k,n}(Q_k, H_{k,n})=s_{k,n}\Big(\frac{\frac{\tau}{\overline{N}_k}
N_F\delta\widetilde{w}^k(Q_k)}{\overline{\gamma}^k}-\frac{1}{|H_{k,n}|^2}\Big)^{+},
\ \forall n \label{eqn:joint-pwr-allo}
\end{align}
\label{Lem:per-user-power-allocation}
\end{Lem}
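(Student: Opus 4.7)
The plan is to use the decomposition of the per-user Q-factor established in Lemma~2 to reduce the vector minimization in \eqref{eqn:Bellman-per-user-q-factor} to $N_F$ decoupled scalar minimizations, one for each subband. Since the per-user Q-factor equals $\sum_n q^k(Q_k,|H_{k,n}|,s_{k,n})$ and the per-user per-subband Q-factor $q^k(Q_k,|H_{k,n}|,s_{k,n})$ in \eqref{eqn:per-user-per-subb-q-factor} is itself defined as a minimum over a single scalar $p_{k,n}$, the joint minimization on the R.H.S.\ of the per-user fixed point equation separates across subbands. Hence it suffices to determine the minimizer of the inner bracket in \eqref{eqn:per-user-per-subb-q-factor} for each $n$ individually.

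First I would handle the trivial case $s_{k,n}=0$: then the term $s_{k,n}\log(1+p_{k,n}|H_{k,n}|^2)$ vanishes while the remaining dependence on $p_{k,n}$ is the strictly increasing linear term $\overline{\gamma}^k p_{k,n}$, forcing $p_{k,n}^*=0$, which is consistent with the $s_{k,n}$ factor outside the bracket in the lemma statement. Next, for $s_{k,n}=1$, I would strip off all terms in the bracket of \eqref{eqn:per-user-per-subb-q-factor} that do not depend on $p_{k,n}$ and retain only
\begin{equation*}
J(p_{k,n}) \;=\; \overline{\gamma}^k p_{k,n} \;-\; \tfrac{N_F\delta\widetilde{w}^k(Q_k)\tau}{\overline{N}_k}\,\log\!\bigl(1+p_{k,n}|H_{k,n}|^2\bigr).
\end{equation*}
Assuming $\delta\widetilde{w}^k(Q_k)\ge 0$ (monotonicity of the value function in the queue length, discussed below), $J$ is strictly convex in $p_{k,n}\ge 0$, so its unique unconstrained stationary point is obtained from $\partial J/\partial p_{k,n}=0$, giving the water level $\frac{\tau N_F\delta\widetilde{w}^k(Q_k)}{\overline{\gamma}^k\overline{N}_k}$ and the interior solution $p_{k,n} = \tfrac{\tau N_F\delta\widetilde{w}^k(Q_k)}{\overline{\gamma}^k\overline{N}_k}-\tfrac{1}{|H_{k,n}|^2}$. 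Projecting onto $\{p_{k,n}\ge 0\}$ yields the $(\cdot)^+$ form in \eqref{eqn:joint-pwr-allo}, and folding in the $s_{k,n}$ prefactor unifies both cases into the single expression claimed.

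The main obstacle is justifying the convexity/monotonicity assumption $\delta\widetilde{w}^k(Q_k)\ge 0$, i.e.\ that $\widetilde{w}^k(Q_k+A_k)$ is nondecreasing in $Q_k$ for every realization of $A_k$. This should follow from a standard coupling/induction argument on the Bellman operator for the per-user per-subband Q-factor: since the per-stage cost $\beta_k f(Q_k)+\underline{\gamma}^k \mathbf{1}[Q_k=N_Q]$ is nondecreasing in $Q_k$ and the queue transition $Q_k\mapsto[Q_k-\mathbf{1}\{\text{dep}\}]^++A_k$ preserves the stochastic order in $Q_k$, monotonicity is inherited through each value iteration and therefore by the fixed point $q^k$ (and hence by $\widetilde{w}^k$ obtained by averaging over CSI with the threshold subband allocation). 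Once this monotonicity is in hand, $\delta\widetilde{w}^k(Q_k)\ge 0$, the water level is well-defined, and the closed-form in \eqref{eqn:joint-pwr-allo} follows immediately from the per-subband first-order condition.
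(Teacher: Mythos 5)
Your proposal is correct and follows essentially the same route as the paper's Appendix E: both reduce the problem to a decoupled per-subband scalar convex program in $p_{k,n}$ and solve the first-order condition to get the water-filling form (the paper arrives at the same reduced objective by expanding the expectation in \eqref{eqn:Bellman-per-user-q-factor} with the exponential-departure transition probabilities and then substituting $\Delta\widetilde{W}^k(Q_k)=N_F\delta\widetilde{w}^k(Q_k)$ from Lemma \ref{Lem:reduced-MDP-opt-control-per-user-per-subband-q-factor}, whereas you read the per-subband objective off directly from \eqref{eqn:per-user-per-subb-q-factor}). The monotonicity condition $\delta\widetilde{w}^k(Q_k)\geq 0$ that you flag as the main obstacle is in fact not needed for the stated formula: when $\delta\widetilde{w}^k(Q_k)<0$ the per-subband objective is increasing in $p_{k,n}\geq 0$, so the minimizer is $p_{k,n}=0$, which is exactly what the $(\cdot)^{+}$ expression returns.
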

\begin{proof}
Please refer to Appendix E for the proof.
\end{proof}
\begin{Rem}(\emph{Multi-level Water-filling Structure of the  Power Control Action}) The  power control
action in \eqref{eqn:joint-pwr-allo} of Lemma
\ref{Lem:per-user-power-allocation} is both function of CSI and QSI
(where it depends on the QSI indirectly via $\delta
\widetilde{w}^k(Q_k)$, which is function of $\{q^k(Q,|H|,s)\}$). It
has the form of {\em multi-level water-filling} where the power is
allocated according to the CSI across subbands but the water-level
is adaptive to the QSI. ~ \hfill\QED
\end{Rem}

\subsection{Per-Stage Per-Subband Auction}\label{subsec_auction}

\begin{figure}[t]
\begin{center}
\includegraphics[height=6cm, width=9cm]{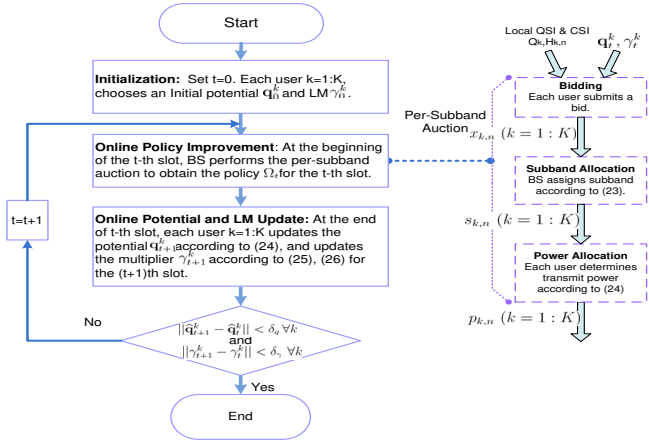}
\caption{Algorithm Flow of the Online Distributive Primal-Dual Value
Iteration Algorithm with Per-stage Auction and Simultaneous Updates
on Potential and Lagrange multipliers (LM). Note that
$t=\{0,1,2,...\}$ is the scheduling slot index.} \label{algorithm2}
\end{center}
\end{figure}

Applying the per-stage subband auction in Section
\ref{subsec_auction} to the system dynamics setup in this section,
we obtain a low computational complexity and signaling overhead {\em
Scalarized Per-Subband Auction} ($\forall n\in\{1,N_F\}$) as
illustrated in Fig. \ref{algorithm2}, which is based on the per-user
subband allocation Q-factor decomposition in Lemma
\ref{Lem:reduced-MDP-opt-control-per-user-per-subband-q-factor} and
the closed-form power allocation actions in Lemma
\ref{Lem:per-user-power-allocation} as follows:
\begin{itemize}
\item \textbf{Bidding}: For the $n$-th subband, each user submits a bid
\begin{align}
X_{k,n}=&\frac{N_F\delta \widetilde{w}^k(Q_k)\tau}{\overline{N_k}}
\log\Big(1+|H_{k,n}|^2\big(\frac{\frac{N_F\delta
\widetilde{w}^k(Q_k)\tau}{\overline{N_k}}}{\overline{\gamma}^k}\nonumber\\
&-\frac{1}{|H_{k,n}|^2}\big)^{+}\Big)-\overline{\gamma}^k\big(\frac{\frac{N_F\delta
\widetilde{w}^k(Q_k)\tau}{\overline{N_k}}}{\overline{\gamma}^k}-\frac{1}{|H_{k,n}|^2}\big)^{+}\nonumber
\end{align}
\item \textbf{Subband Allocation}: The BS assigns the $n$-th subband
according to the highest bid:
\begin{eqnarray}
 s^*_{k,n}(\mathbf{H}_n, \mathbf{Q}) = \left\{
\begin{array}{ll} 1, &
\text{if}\quad k=k^*_n \quad \text{and} \quad X_{k^*_n,n} >0\\
0, & \textrm{otherwise}
\end{array} \right.\label{eqn:per-subc-subc-allo}
\end{eqnarray}
where $k^*_n=\arg\max_k X_{k,n}$ denotes the user with the highest
bid and then broadcasts the allocation results to $K$ users.
\item \textbf{Power Allocation}: Each user determines the transmit power
according to:
\begin{align}
p^*_{k,n}(\mathbf{H}_n, \mathbf{Q})=&s^*_{k,n}(\mathbf{H}_n,
\mathbf{Q})\Big(\frac{\frac{\tau}{\overline{N}_k}
N_F\delta\widetilde{w}^k(Q_k)}{\overline{\gamma}^k}\nonumber\\
&-\frac{1}{|H_{k,n}|^2}\Big)^{+}\label{eqn:per-subc-pwr-allo}
\end{align}
\end{itemize}

\begin{Rem}[Comparison to Brute-Force (CSI,QSI)-Feedback] In the brute-force (CSI,QSI)-feedback
scheme, each MS $k$ needs to feedback CSI $|H_{k,n}|$ ($\forall n$),
QSI $Q_k$ and the LMs $\gamma_k$. BS needs to solve the subband
allocation $s^*_{k,n}$ and power allocation $p^*_{k,n}$, and
broadcast
the (real number) power allocation $p^*_{k,n}$ to the MSs. 
Note that for the signaling from MS to BS, the quantization bits
used in signaling for the bid $X_{k,n}$ versus for the CSI
$|H_{k,n}|$ is similar. However, the proposed per-subband auction
does not need to feedback QSI and LM. For the signaling from BS to
MS, the proposed per-stage auction only needs 1 bit per subband for
$s^*_{k,n}$. However, the brute-force (CSI,QSI)-feedback scheme
needs much more bits per subband for a relatively accurate
$p^*_{k,n}$ to ensure acceptable performance. Therefore, compared
with the brute-force (CSI,QSI)-feedback scheme for uplink OFDMA
systems, the proposed scalarized per-subband auction greatly reduces
the signaling overhead and computation complexity (at the BS) for
subband allocation and power allocation in the decentralized
solution.~ \hfill\QED
\end{Rem}


\subsection{Online Per-user Primal-Dual  Learning Algorithm via Stochastic Approximation}\label{subsec_online distri_alg}

In this part, we shall apply the online localized primal-dual
learning algorithm in Section \ref{subsec_online distri_alg} to
estimate $\{q^k(Q,|H|,s)\}$ and LMs. The update equations for LMs
are the same as \eqref{eqn:gamma-dis-est1} and
\eqref{eqn:gamma-dis-est1-under}, and hence, we shall focus on the
online learning of per-user per-subband Q-factor $\{q^k(Q,|H|,s)\}$
in the following.

 For
notation convenience, we denote the per-user per-subband
state-action pair as $\phi\triangleq(Q,|H|,s)$. Let $i$ ($1\leq i
\leq I_{\phi} $) be a dummy index enumerating over all the possible
state-action pairs of each user over one subband with cardinality
$I_{\phi}=2 N_H(N_Q+1)$ and $\phi_{k,n} (t)\triangleq\big(Q_k(t),
|H_{k,n}(t)|, s_{k,n}(t)\big)$ be the current state-action pair
observed at MS $k$ on subband $n$ at the $t$-th slot.  Based on the
current observation $\phi_{k,n} (t)$, user $k$ updates its estimate
on the per-user per-subband Q-factor according to:
\begin{align}
\ q^k_{t+1}(\phi^i)=&
       q^k_t(\phi^i) + \epsilon_{l_k(\phi^i,t)}^{q}
\big[g_{k,n^k_i}(\boldsymbol{\gamma}^k_t,\phi^i,p_{k,n^k_i}(t))\nonumber \\
&+ \widetilde{w}^k_t(Q_k(t+1))\big)
-\big(g_{k,\bar{n}^k_I}(\boldsymbol{\gamma}^k_t,\phi^I,p_{k,\bar{n}^k_I}(\bar{t}))\nonumber\\
&+ \widetilde{w}^k_t(Q_k(\bar t +1)) -q^k_t(\phi^I)\big) -
 q^k_t(\phi^i) \big]\nonumber\\
& \cdot \mathbf{1}\big[\cup_n\{\phi_{k,n} (t)=\phi^i\} \big]
\label{eqn:q-est1}
\end{align}
where $l_k(\phi^i,t)\triangleq\sum_{m=0}^t
\mathbf{1}\big[\cup_n\{\phi_{k,n} (m)=\phi^i\} \big]$ is the number
of updates of $q^k(\phi^i)$ till $t$\cite{Borkarasynchronous:1998},
$n^k_i\in \{n: \phi_{k,n} (t)=\phi^i \}$\footnote{$\forall n^k_i\in
\{n: \phi_{k,n} (t)=\phi^i \}$,
$g_{k,n^k_i}(\boldsymbol{\gamma}^k_t,\phi^i,p_{k,n^k_i}(t))$ is
equal.}, $\bar{t}\triangleq \sup\{t:\phi_{k,n}(t)=\phi^I\}$,
$\phi^I$ is the reference (per-subband) state-action
combination\footnote{The reference (per-user) state-action
combination $\varphi^r$ is composed of the (per-subband)
state-action combination $\phi^I$. For example, say $N_F = 2$, $Q =
\{0,1\}$, $|H| = \{\text{Good (G), Bad (B)}\}$, $s = \{0, 1\}$,
$I_{\varphi} = 2\times2^2\times 2^2 = 48$, $I_{\phi} =
2\times2\times 2 = 8$. Let $\phi^I=(0,\text{B},0)$, then
$\varphi^r=(0,\{\text{B,B}\}, \{0,0\})$ (aggregated over 2
subbands). Without loss of generality, we initialize  the per-user
per-subband Q-factor as 0, i.e. $q^k_0(\phi^I)=0\ \forall k$.}
(per-user per-subband), $\bar{n}^k_I\in \{n: \phi_{k,n}
(\bar{t})=\phi^I \}$. 

\subsection{Rate of Convergence and Asymptotic Performance}
In this section, we shall discuss the convergence speed as well as
the asymptotic performance of the proposed distributive stochastic
learning algorithm. For instance, we are interested in how the
convergence speed scales with the number of MS $K$ and the number of
subbands $N$. In the asynchronous per-user per-subband Q-factor
learning algorithm, at slot $t$, each user $k$ updates the Q-factor
of all the per-user per-subband state-action pairs observed in $N$
subbands. Thus, the convergence speed of the asynchronous per-user
per-subband Q-factor learning algorithm depends on the speed that
every per-user per-subband state-action pair of each user $k$ is
visited at the steady state. We define the {\em ergodic visiting
speed} for each MS $k$ as $V_k=\lim_{t\to \infty} \frac{\min_i
l_k(\phi^i,t)}{t}$, where $l_k(\phi^i,t)\triangleq\sum_{m=0}^t
\mathbf{1}\big[\cup_n\{\phi_{k,n} (m)=\phi^i\} \big]$ is the number
of updates of $q^k(\phi^i)$ up to slot $t$. The following lemma
summarizes the main results regarding the {\em ergodic visiting
speed}.

\begin{Lem}[Ergodic Visiting Speed w.r.t. K and
N] The {\em ergodic visiting speed} for each MS $k$ of the per-user
per-subband Q-factor stochastic learning algorithm in
\eqref{eqn:q-est1} is given by $V_k=\mathcal O(N/K)$ ($\forall k$).
~ \hfill\QED \label{Lem:order-of-growth}
\end{Lem}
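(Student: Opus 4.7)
The plan is to use ergodicity of the induced Markov chain $\{\boldsymbol{\chi}(t)\}$ under the converged stationary policy to turn the limiting ratio $l_k(\phi^i,t)/t$ into a steady-state probability, and then bound this probability through a direct combinatorial argument on the per-stage auction. Concretely, Theorem \ref{thm:convergence-dis} gives a.s.\ convergence of the LMs and per-user Q-factors, so after an almost surely finite burn-in the policy is effectively stationary. Under any such stationary unichain policy the chain $\{\boldsymbol{\chi}(t)\}$ is ergodic with unique invariant measure $\pi_{\chi}$ (cf.\ Lemma \ref{Lem:bellman-eqn-Q-factor}), and the Markov ergodic theorem gives
\begin{equation*}
\lim_{t\to\infty}\frac{l_k(\phi^i,t)}{t}
\;=\;\pi_k(\phi^i)\;\triangleq\;\Pr_{\pi_{\chi}}\!\Big[\bigcup_{n=1}^{N}\{\phi_{k,n}=\phi^i\}\Big]
\quad\text{a.s.},
\end{equation*}
so the problem reduces to computing $V_k=\min_i \pi_k(\phi^i)$.

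Next I would decompose this probability across subbands. Writing $\phi^i=(Q,|H|,s)$ and conditioning on $Q_k=Q$, Assumption \ref{Asump:H} makes the per-subband fadings $|H_{k,n}|$ i.i.d.\ in $n$, and the per-subband allocation $s_{k,n}$ produced by \eqref{eqn:per-subc-subc-allo} is a deterministic function of the bids on subband $n$, which depend only on the CSI of subband $n$ (given the common QSI and the converged LMs/Q-factors). Hence the events $E_n=\{|H_{k,n}|=|H|,\,s_{k,n}=s\}$ are conditionally i.i.d.\ across subbands given $Q_k=Q$. Setting $p\triangleq\Pr[E_1\mid Q_k=Q]$ yields
\begin{equation*}
\pi_k(\phi^i)\;=\;\Pr[Q_k=Q]\,\bigl(1-(1-p)^N\bigr).
\end{equation*}

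The scaling of $p$ with $K$ then drives the result. Under the per-stage auction, user $k$ is awarded subband $n$ iff its bid $X_{k,n}$ dominates the $K-1$ competitors. By the (near-)symmetry of the bid distributions across users, the conditional probability that user $k$ wins a tagged subband is $\Theta(1/K)$, so $p=\Theta(1/K)$ for $s=1$ and $p=\Theta(1)$ for $s=0$. The minimum over $i$ is therefore attained at a pair with $s=1$, and a Bernoulli expansion $1-(1-p)^N\approx Np$ (valid in the relevant $N\ll K$ regime, and the bound only tightens otherwise) gives $\pi_k(\phi^i)=\Theta(N/K)$ after absorbing $N_H$, $N_Q$ and $\Pr[Q_k=Q]$ as $\mathcal{O}(1)$ constants. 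Combined with the ergodic reduction, this delivers $V_k=\mathcal{O}(N/K)$.

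The main obstacle I anticipate is justifying the $\Theta(1/K)$ winning probability for heterogeneous users, since their bids are not literally exchangeable once the per-user $\widetilde{w}^k(Q_k)$ and $\overline{\gamma}^k$ have converged to different limits. My proposed remedy is to note that each $X_{k,n}$ is a bounded function of $(|H_{k,n}|,Q_k)$ with a non-degenerate distribution supported on a common bid range, so standard order-statistics arguments sandwich the per-subband winning probability between two positive constants times $1/K$. This preserves the claimed $\mathcal{O}(N/K)$ scaling regardless of user heterogeneity.
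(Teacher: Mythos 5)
Your argument is, at its core, the same one the paper gives in Appendix~F: the factor $N$ comes from the parallelism of observing all $N$ subbands in one slot, and the factor $1/K$ comes from the fact that each subband is won by only one of $K$ competing users, so the bottleneck pair is the one with $s=1$. Where you differ is in rigor rather than route: the paper states both effects as one-sentence heuristics and multiplies them, whereas you reduce $V_k$ to a stationary probability via the ergodic theorem and then combine the two effects explicitly through $\Pr[\cup_n E_n]=1-(1-p)^N\le Np$ with $p=\Theta(1/K)$, which is a genuine improvement because it shows the two scalings compose correctly (and that the bound survives outside the $N\ll K$ regime). Two small caveats in your elaboration: the events $E_n$ are conditionally i.i.d.\ across subbands only after conditioning on the \emph{joint} QSI $\mathbf{Q}$ (the competitors' queue states are common to all subbands), not just on $Q_k$, though averaging over $\mathbf{Q}_{-k}$ preserves the $\Theta(Np)$ order; and the $\Theta(1/K)$ per-user winning probability for heterogeneous users is exactly the step the paper also leaves unjustified --- your sandwich argument needs the bid distributions to overlap nontrivially, since a user whose converged water-level is systematically dominated could win far less often than $1/K$. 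You flag this honestly, and your proposal is no weaker than the paper's own proof on that point.
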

\begin{proof}
Please refer to Appendix F.
\end{proof}

\begin{Rem}[Interpretations] Note that the convergence rate of the
learning algorithm is related to  $ V_k=\mathcal O(N/K)$. Observe
that the convergence speed increases as $N$ increase. This is
because in the asynchronous update process in \eqref{eqn:q-est1},
each user $k$ updates the Q-factor of all the per-user per-subband
state-action pair observed in $N$ subbands in a single time slot.
Hence,  there is intrinsic {\em parallelism} in the learning process
across different subbands.
\end{Rem}

Finally, we shall show that the performance of the distributive
algorithm is asymptotically global optimal for large number of
users.

\begin{Thm} [Asymptotically Global Optimal]
For sufficiently large $K$ such that the optimization Problem
\ref{Prob2} is feasible, the performance of the online distributive
per-user primal-dual  learning algorithm is asymptotically global
optimal, i.e. $\sum_{k=1}^K
\mathcal{Q}^k_{\infty}(\boldsymbol{\chi}_k,\mathbf{s}_k)\rightarrow
\mathcal{Q}^*(\boldsymbol{\chi},\mathbf{s})$ and
$\boldsymbol{\gamma}_{\infty} \rightarrow \boldsymbol{\gamma}^*$ as
$K\rightarrow \infty$, where
$\mathcal{Q}^*(\boldsymbol{\chi},\mathbf{s})$ and
$\boldsymbol{\gamma}^*$ are the solution of the centralized Bellman
equation in \eqref{eqn:Bellman-q-factor}  satisfying the
corresponding constraints in (\ref{eqn:tx-pwr2}),
\eqref{eqn:ptdrop2}. ~ \hfill\QED
\label{Thm:asymp-performance-dis-alg}
\end{Thm}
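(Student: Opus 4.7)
The strategy is to invoke the convergence result of Theorem \ref{thm:convergence-dis} and then show that the distributed fixed point characterized by \eqref{bellmaneqn-wp1-dis-optimal-gamma} becomes equivalent to the centralized Bellman equation \eqref{eqn:Bellman-q-factor} in the large-$K$ limit. Since Theorem \ref{thm:convergence-dis} already gives $(\boldsymbol{\mathcal{Q}}^k_t,\gamma^k_t)\to(\boldsymbol{\mathcal{Q}}^k_\infty,\gamma^k_\infty)$ almost surely, it suffices to show that $\sum_k \mathcal{Q}^k_\infty(\boldsymbol{\chi}_k,\mathbf{s}_k)$ satisfies the centralized Bellman equation up to an error that vanishes as $K\to\infty$, and that $\boldsymbol{\gamma}_\infty$ satisfies the same primal feasibility conditions as $\boldsymbol{\gamma}^*$.

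\textbf{Reducing the Distributed Fixed Point to the Centralized One.} The first step is to sum the per-user fixed point equation \eqref{eqn:Bellman-per-user-q-factor} over $k=1,\dots,K$. The per-stage cost sums exactly, $\sum_k g_k(\boldsymbol{\gamma}^k,\boldsymbol{\chi}_k,\mathbf{s}_k,\mathbf{p}_k)=g(\boldsymbol{\gamma},\boldsymbol{\chi},\mathbf{s},\mathbf{p})$, so the discrepancy between the summed equation and \eqref{eqn:Bellman-q-factor} is concentrated in (a) the linear decomposition ansatz $\mathcal{Q}(\boldsymbol{\chi},\mathbf{s})\approx\sum_k \mathcal{Q}^k(\boldsymbol{\chi}_k,\mathbf{s}_k)$ used in the next-state term, and (b) the replacement of $\min_{\mathbf{s}'}\mathcal{Q}(\boldsymbol{\chi}',\mathbf{s}')$ by the expected value $W^k$ under the channel-only rule $s_{k,n}=\mathbf{1}[|H_{k,n}|\geq H^*_{K-1}]$.

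\textbf{Vanishing of the Approximation Error.} The main technical step is to show that both (a) and (b) above are asymptotically tight as $K\to\infty$. For a fixed user, the probability of having the largest channel gain on a given subband among $K$ users is $1/K$, and by extreme value theory the order statistic $H^*_{K-1}$ grows without bound while the queue-state-dependent multiplier $\tfrac{N_F \delta \widetilde{w}^k(Q_k)\tau}{\overline{N}_k}$ appearing in the bid $X_{k,n}$ remains bounded. Hence for sufficiently large $K$, the ordering of the bids $\{X_{k,n}\}_k$ is driven by the channel magnitudes with probability tending to one, which simultaneously (i) makes the per-stage auction winner coincide with the user attaining $H^*_{K-1}$, (ii) makes the channel-only assignment used inside $W^k$ consistent with $\arg\min_{\mathbf{s}}\sum_k \mathcal{Q}^k(\boldsymbol{\chi}_k,\mathbf{s}_k)$, and (iii) makes the cross-user coupling in $\mathcal{Q}(\boldsymbol{\chi},\mathbf{s})$ collapse to a sum of independent per-user contributions, validating the linear approximation \eqref{eqn:approximate-Q}. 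Combining these facts with a continuity/perturbation argument on the Bellman fixed point operator then yields $\sum_k \mathcal{Q}^k_\infty(\boldsymbol{\chi}_k,\mathbf{s}_k)\to \mathcal{Q}^*(\boldsymbol{\chi},\mathbf{s})$.

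\textbf{Convergence of the Lagrange Multipliers and Main Obstacle.} By Lemma \ref{Lem:convergence-dis-gamma}, $\boldsymbol{\gamma}_\infty$ satisfies the average power and packet drop constraints \eqref{eqn:tx-pwr2} and \eqref{eqn:ptdrop2}; the centralized optimum $\boldsymbol{\gamma}^*$ satisfies the same constraints. Once the primal Q-factors coincide in the large-$K$ limit, the saddle-point condition \eqref{saddle-pt-cond} pins down the dual optimum uniquely, giving $\boldsymbol{\gamma}_\infty\to\boldsymbol{\gamma}^*$. The hard part, as noted, will be step three: quantifying the asymptotic agreement between the per-stage auction outcome and the channel-only allocation rule inside $W^k$, and translating that event-level convergence into a convergence of the Bellman fixed point. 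This will likely require a combined extreme-value-theory argument for the order statistic $H^*_{K-1}$ together with a Lipschitz/contraction bound for the Bellman operator with respect to perturbations in its transition kernel, so that small perturbations of the kernel produce small perturbations of the solution.
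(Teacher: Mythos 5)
Your proposal correctly identifies the key asymptotic ingredient — that by extreme value theory the bid ordering is dominated by the channel magnitudes for large $K$, so the per-stage auction winner coincides with the user having the best CSI on each subband with probability tending to one. This is exactly the argument the paper uses at the end of Appendix G ($|H_{k^*_n,n}|^2$ grows like $\log K$ while $\max_{k,j}|\Delta_k\widetilde V^*(\mathbf{Q})-\Delta_j\widetilde V^*(\mathbf{Q})|=O(1)$, so $\Pr[k_n^*=\arg\max_k X_{k,n}]\to 1$). However, your step (iii) — that the channel-driven allocation ``makes the cross-user coupling in $\mathcal{Q}(\boldsymbol{\chi},\mathbf{s})$ collapse to a sum of independent per-user contributions'' — is asserted rather than proved, and it is precisely the substantive content of the paper's proof. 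The paper establishes a separate structural lemma (the Additive Property, Lemma \ref{Lem:additive-property}): under any \emph{CSI-only} subband allocation rule (the Best-CSI policy of Definition \ref{def semi-global}), the subband assignment no longer depends on the joint QSI, so the conditional value $\widetilde V(\mathbf{Q})=\mathbb{E}[\mathcal{Q}(\boldsymbol{\chi},\widetilde\Omega_s(\mathbf{H}))|\mathbf{Q}]$ satisfies a system of Poisson equations that decouples \emph{exactly} into $K$ independent single-user systems, each with $(N_Q+1)$ equations in the unknowns $\{\theta^k,\widetilde W^k(Q_k)\}$; this yields $\widetilde V(\mathbf{Q})=\sum_k\widetilde W^k(Q_k)$, $\theta=\sum_k\theta^k$, and hence $\mathcal{Q}(\boldsymbol{\chi},\mathbf{s})=\sum_k\mathcal{Q}^k(\boldsymbol{\chi}_k,\mathbf{s}_k)$ with each summand satisfying the per-user fixed point equation \eqref{eqn:Bellman-per-user-q-factor} exactly, not approximately. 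Without this lemma, your claim that the linear approximation is ``validated'' is circular: the reason a channel-only allocation implies additivity is the induced independence of the $K$ queue processes, which you never invoke.

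A second, smaller gap is your closing reliance on ``a Lipschitz/contraction bound for the Bellman operator with respect to perturbations in its transition kernel.'' For an average-reward MDP the Bellman operator is not a contraction in any standard norm (at best in a span semi-norm under conditions like \eqref{promatrix-cond-dis}), so this perturbation step is nontrivial and you flag it yourself as the hard part without carrying it out. The paper avoids needing any such bound precisely because the decomposition under the Best-CSI policy is exact: the only limiting statement required is the event-level coincidence of the auction with the Best-CSI rule, after which Lemma \ref{Lem:additive-property} and Theorem \ref{thm:convergence-dis} finish the argument with no kernel-perturbation analysis. To repair your proof you would either need to prove the additive-decomposition lemma (at which point you have reproduced the paper's route) or actually supply the span-seminorm stability estimate you allude to, which is a substantially harder and unnecessary detour.
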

\begin{proof}
Please refer to Appendix G.
\end{proof}

\section{Simulation Results and Discussions}\label{sec_numerical_results}

In this section, we shall compare our proposed per-user online
learning algorithm via stochastic approximation to the delay optimal
problem for OFDMA uplink systems with the centralized  subband
allocation Q-factor $\{\mathcal{Q}(\boldsymbol{\chi},\mathbf{s})\}$
learning algorithm and three other reference baselines. Baseline 1
refers to a throughput optimal policy\footnote{Throughput optimal
policy means that it shall stabilize the queue whenever the arrival
rate vector falls within the stability region.}, namely the {\em
Modified Largest Weighted Delay First (M-LWDF)\cite{MagMLWDF:2001},
in which the subband and power control are chosen to maximize the
weighted delay.} Baseline 2 refers to the {\em CSIT Only
Scheduling}, in which optimal subband and power allocation is
performed purely based on CSIT. Baseline 3 refers to the {\em Round
Robin Scheduling}, in which different users are served in TDMA
fashion with equally allocated time slots and water-filling power
allocation across the subbands. In the simulation, we consider
Poisson packet arrival with average arrival rate $\lambda_k$ (pck/s)
and exponential packet size distritution with mean $\overline N_k$.
We consider average delay as our utility ($f(Q_k) =
\frac{Q_k}{\lambda_k}$). We assume there are 64 subbands with total
BW 10MHz, and the number of independent subbands $N_F$ is 4. The
scheduling slot duration $\tau$ is 5ms. The buffer size $N_Q$ is 10.

\begin{figure}[t]
\begin{center}
\includegraphics[height=5.5cm, width=8cm]{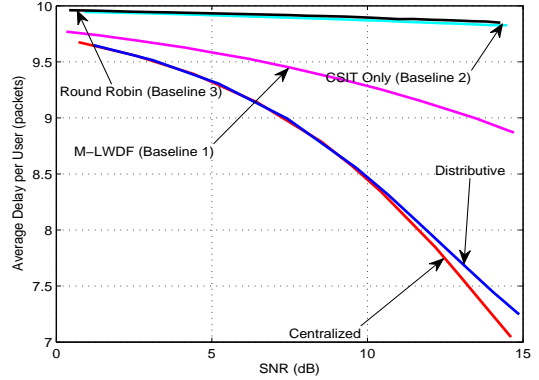}
\caption{Average delay per user versus SNR. The number of users
$K=2$, the buffer size $N_Q=10$, the mean packet size
$\overline{N}_k=305.2$ Kbyte/pck, the average arrival rate
$\lambda_k=20$ pck/s, the queue weight $\beta_{1}=\beta_{2}=1$. The
packet drop rate of the proposed scheme is 5$\%$  while the packet
drop rate of the Baseline 1 (M-LWDF), Baseline 2 (CSIT Only) and
Baseline 3 (Round Robin) are 5$\%$, 8$\%$, 9$\%$ respectively. }
\label{10N_Q_delay_vs_SNR}
\end{center}
\end{figure}

\begin{figure}[t]
\begin{center}
\includegraphics[height=5.5cm,width=8cm]{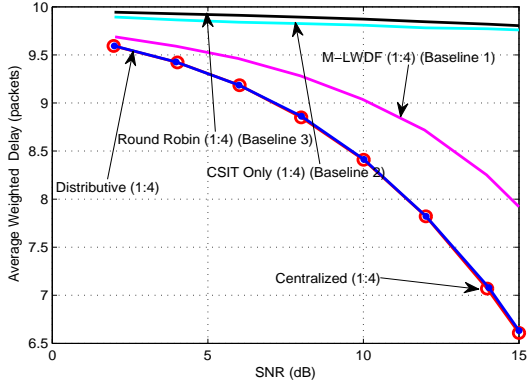} \caption{Average
weighted delay versus SNR. The number of users $K=2$, the buffer
size $N_Q=10$, the mean packet size $\overline{N}_k=305.2$
Kbyte/pck, the average arrival rate $\lambda_k=20$ pck/s, the queue
weight $\beta_1=1$, $\beta_2=4$. The packet drop rate of the
proposed scheme is 7$\%$  while the packet drop rate of the Baseline
1 (M-LWDF), Baseline 2 (CSIT Only) and
Baseline 3 (Round Robin) are 7$\%$, 9$\%$, 9$\%$ respectively. 
} \label{2K_difbeta_1_4_delay_vs_SNR}
\end{center}
\end{figure}

Figure~\ref{10N_Q_delay_vs_SNR} illustrates the average delay per
user versus SNR of 2 users. It can be observed that both the
centralized solution and the distributive solution have significant
gain compared with the three baselines (e.g. more than 7.5 dB gain
over M-LWDF when average delay per queue is less than 9 packets). In
addition, the delay performance of the distributive solution, which
is asymptotically global optimal in large number of users, is very
close to the performance of the optimal solution even in $K=2$.
Similar observations could be made in
Figure~\ref{2K_difbeta_1_4_delay_vs_SNR} where we plot the average
weighted delay versus SNR of two heterogeneous users.

\begin{figure}[t]
\begin{center}
\includegraphics[height=5.5cm, width=8cm]{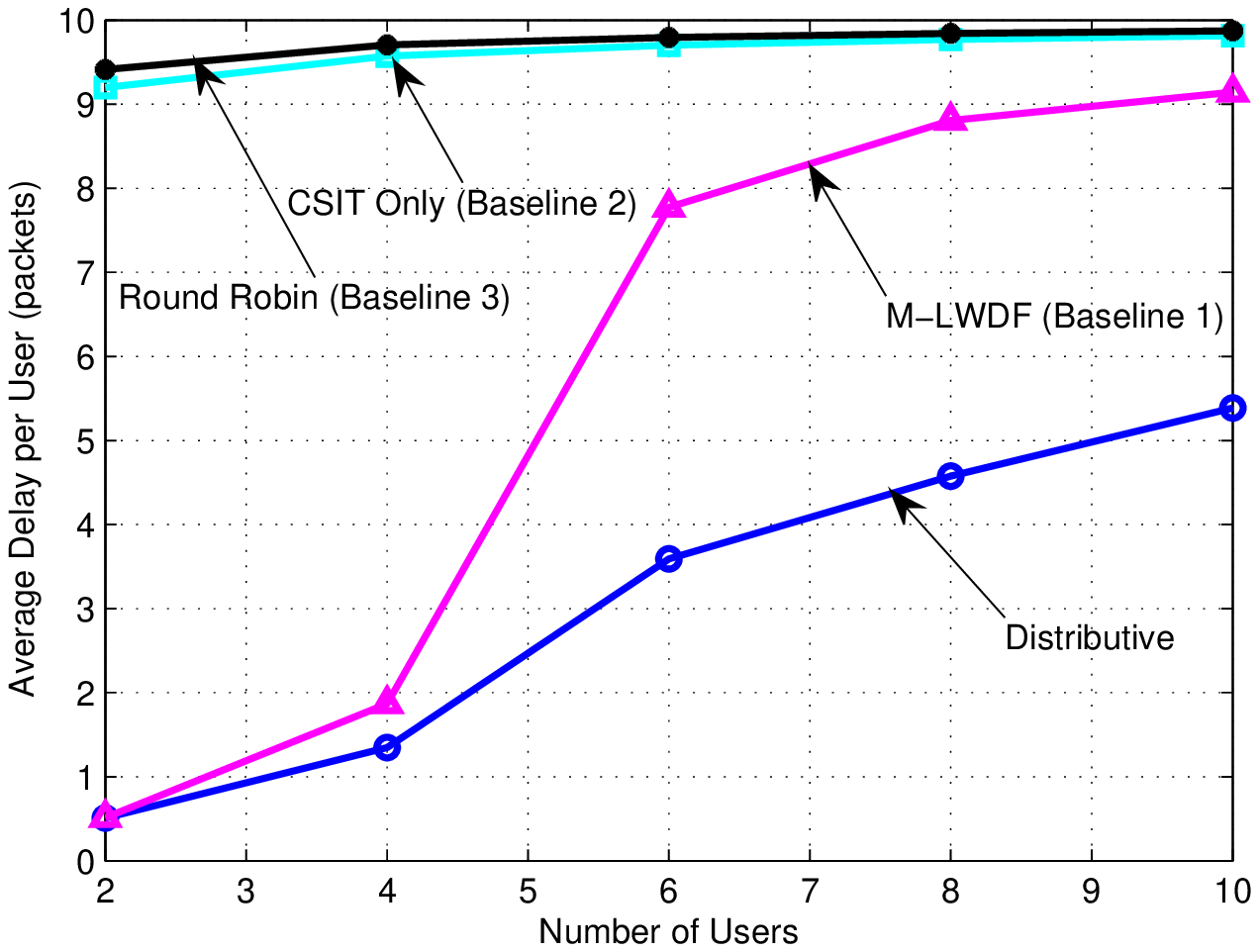}
\caption{Average delay per user versus the number of users. The
buffer size $N_Q=10$, the mean packet size $\overline{N}_k=78.125$
Kbyte/pck, the average arrival rate $\lambda_k=20$ pck/s, the queue
weight $\beta_k=1$ at a transmit SNR$=10$dB. The packet drop rate of
the proposed scheme is 4$\%$  while the packet drop rate of the
Baseline 1 (M-LWDF), Baseline 2 (CSIT Only) and Baseline 3 (Round
Robin) are 4$\%$, 8$\%$, 9$\%$ respectively.
}\label{delay_vs_largeK}
\end{center}
\end{figure}

Figure~\ref{delay_vs_largeK} illustrates the average delay per user
of the distributive solution versus the number of users at a
transmit SNR$=10$dB. It is obvious that the distributive solution
has significant gain in delay over the three baselines.
Figure~\ref{cdf_6K_10db} further illustrates the cumulative
distribution function (cdf) of the queue length for $K=6$ and
SNR$=10$dB. It can be seen that the distributive solution achieves a
smaller queue length compared with the other baselines.

Figure~\ref{convergence_10K} illustrates the convergence property of
the proposed algorithm. We plot the average
$\{\widetilde{W}^k(Q_k)\}$ of 10 users versus scheduling slot index
at a transmit SNR$=10$dB. It can be seen that the distributive
algorithm converges quite fast\footnote{In conventional iterative
algorithms for deterministic NUM, there is message passing between
iterative steps within a CSI realization and these iterative steps
(before convergence) are overheads because they do not carry useful
payload. On the other hand, the proposed algorithm is an online
distributive algorithm and hence, the slots before "convergence"
also carry useful payload and they are not ``wasted''.}. The average
delay corresponding to the average $\{\widetilde{W}^k(Q_k)\}$ at the
500-th scheduling slot is 5.9 pck, which is much smaller than the
other baselines.

\begin{figure}[t]
\begin{center}
\includegraphics[height=5.5cm, width=8cm]{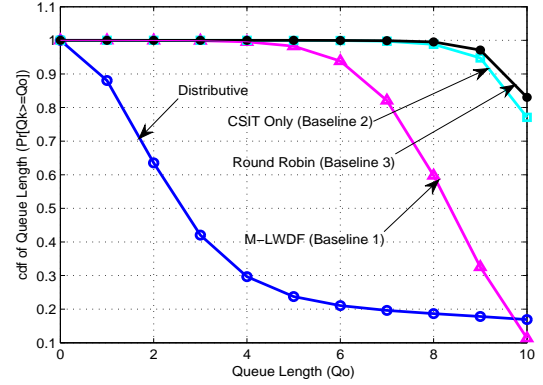}
\caption{Cumulative distribution function (cdf) of the queue length.
The buffer size $N_Q=10$, the mean packet size
$\overline{N}_k=78.125$ Kbyte/pck, the average arrival rate
$\lambda_k=20$ pck/s, the queue weight $\beta_k=1$, the number of
users $K=6$ at a transmit SNR$=10$dB. The packet drop rate of the
proposed scheme is 2$\%$  while the packet drop rate of the Baseline
1 (M-LWDF), Baseline 2 (CSIT Only) and Baseline 3 (Round Robin) are
2$\%$, 8$\%$, 8$\%$ respectively. }\label{cdf_6K_10db}
\end{center}
\end{figure}
\begin{figure}[t]
\begin{center}
\includegraphics[height=5.5cm, width=8cm]{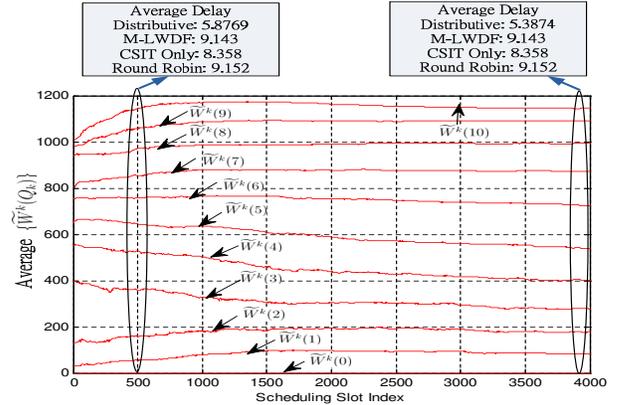}
\caption{Illustration of convergence property. The average
$\{\widetilde{W}^k(Q_k)\}$ of 10 users versus the scheduling slot
index. The number of users $K=10$, the buffer size $N_Q=10$, the
mean packet size $\overline{N}_k=78.125$ Kbyte/pck, the average
arrival rate $\lambda_k=20$ pck/s, the queue weight $\beta_k=1$ at a
transmit SNR$=10$dB. The packet drop rate of the proposed scheme is
4$\%$  while the packet drop rate of the Baseline 1 (M-LWDF),
Baseline 2 (CSIT Only) and Baseline 3 (Round Robin) are 4$\%$,
8$\%$, 9$\%$ respectively.}\label{convergence_10K}
\end{center}
\end{figure}

\section{Summary}\label{sec_summary}
In this paper, we consider a distributive delay-optimal power and
subband allocation design for uplink OFDMA system, which is cast
into an infinite-horizon average-reward CMDP.  To address the
distributive requirement and the issue of exponential memory
requirement and computational complexity, we proposed a per-user
online learning with per-stage auction, which requires local QSI and
local CSI only. We show that under the auction, the distributive
online learning converges with probability 1. For illustration, we
apply the proposed learning algorithm to an application example with
exponential packet size distribution. We show that the delay-optimal
power control has the {\em multi-level water-filling} structure.  We
show that the proposed algorithm converges to the global optimal
solution for sufficiently large number of users. Numerical results
illustrated significant delay performance gain over various
baselines.
\begin{appendix}

\section*{Appendix A: Proof of Lemma \ref{Lem:bellman-eqn-Q-factor}}
For a given $\boldsymbol{\gamma}$, the optimizing policy for the
unconstrained MDP in \eqref{uncons-MDP} can be obtained by solving
the {\em Bellman equation}  w.r.t.
$(\theta,\{V(\boldsymbol{\chi})\})$ as below\cite{Cao:2007}:
\begin{align}
&\theta + V(\boldsymbol{\chi}^i), \quad \quad \quad \quad \quad \quad \quad \quad  \quad \quad \forall 1\leq i\leq I_{\chi} \label{eqn:Bellman1} \\
=& \min_{\Omega(\boldsymbol{\chi}^i)} \Big[
g(\boldsymbol{\gamma},\boldsymbol{\chi}^i,
\Omega(\boldsymbol{\chi}^i)) + \sum_{\boldsymbol{\chi}^j}
\Pr[\boldsymbol{\chi}^j|\boldsymbol{\chi}^i,\Omega(\boldsymbol{\chi}^i)]
V(\boldsymbol{\chi}^j)\Big]\nonumber
\end{align}
where $\Omega(\boldsymbol{\chi}^i)=(\mathbf{p},\mathbf{s})$ is the
power control and subband allocation actions taken in state
$\boldsymbol{\chi}^i$, $\theta = L_{\beta}^*(\boldsymbol{\gamma}) =
\inf_{\Omega}L_{\beta}(\Omega,\boldsymbol{\gamma})$ is the optimal
average reward per stage, $\{V(\boldsymbol{\chi})\}$ is the
potential function of the MDP. Since
$\Omega(\boldsymbol{\chi}^i)=\big(\Omega_s(\boldsymbol{\chi}^i),\Omega_p(\boldsymbol{\chi}^i)\big)$,
we define the {\em subband allocation Q-factor} of state
$\boldsymbol{\chi}^i$ under subband allocation action $\mathbf{s}$
as $\mathcal{Q}(\boldsymbol{\chi}^i,\mathbf{s})\triangleq
\min_{\Omega_p(\boldsymbol{\chi}^i)} \Big[
g(\boldsymbol{\gamma},\boldsymbol{\chi}^i,\mathbf{s},\Omega_p(\boldsymbol{\chi}^i))
+ \sum_{\boldsymbol{\chi}^j}
\Pr[\boldsymbol{\chi}^j|\boldsymbol{\chi}^i,\mathbf{s},\Omega_p(\boldsymbol{\chi}^i)]
V(\boldsymbol{\chi}^j)\Big]-\theta $. Thus,
$V(\boldsymbol{\chi})=\min_{\mathbf{s}}\mathcal{Q}(\boldsymbol{\chi},\mathbf{s})$
($\forall \boldsymbol{\chi}$) and
$\{\mathcal{Q}(\boldsymbol{\chi},\mathbf{s})\}$ satisfy the Bellman
equation in \eqref{eqn:Bellman-q-factor}.

\section*{Appendix B: Proof of Lemma \ref{Lem:convergence-dis-fixed-gamma}}
Since $\forall k$, each state-action pair $\varphi^i$ is updated
comparably often \cite{Borkarasynchronous:1998}, the only difference
between the synchronous update and asynchronous update is that the
resultant ODE of the asynchronous update is a time-scaled version of
the synchronous update \cite{Borkarasynchronous:1998}. However, it
does not affect the convergence behavior. Therefore, we consider the
convergence of related synchronous version for simplicity in the
following.

Due to symmetry, we only consider the update for user $k$. 
It can be easily proved that the synchronous version of the per-user
Q-factor update in \eqref{eqn:Q-est1} is equivalent to the per-user
Q-factor update given by
\begin{align}
\mathcal{Q}^k_{t+1}(\varphi^i)=\mathcal{Q}^k_t(\varphi^i)+\epsilon^q_t
Y^k_t(\boldsymbol{\gamma}^k,\varphi^i)\quad 1\leq i \leq
I_{\varphi}\label{eqn:Q-factor-update}
\end{align}
where
$Y^k_t(\boldsymbol{\gamma}^k,\varphi^i)=g_k(\boldsymbol{\gamma}^k,\varphi^i,\mathbf{p}^k(t))
+\widetilde{W}^k_t(Q_k(t+1))
-\big(g_k(\boldsymbol{\gamma}^k,\varphi^r,\mathbf{p}^k(\bar{t}))+\widetilde{W}^k_t(\bar{Q}^r_k)-\mathcal{Q}^k_t(\varphi^r)\big)-\mathcal{Q}^k_t(\varphi^i)$.
Denote $\mathbf{Y}^k_t\triangleq
\big(Y^k_t(\boldsymbol{\gamma}^k,\varphi^1),\cdots,Y^k_t(\boldsymbol{\gamma}^k,\varphi^{I_{\varphi}})\big)^T$.
Let $\underline{\boldsymbol{\mathcal{Q}}}_t\triangleq
(\boldsymbol{\mathcal{Q}}^1_t,\cdots,\boldsymbol{\mathcal{Q}}^K_t)$
and $\underline{\mathbf{Y}}_t\triangleq
(\mathbf{Y}^1_t,\cdots,\mathbf{Y}^K_t)$ be the aggregate vector of
per-user Q-factor and $\mathbf{Y}^k_t$ (aggregate across all $K$
users in the system). We shall first establish the convergence of
the martingale noise in the Q-factor update dynamics.  Let
$\mathbb{E}_t$ and $\Pr_t$ denote the expectation and probability
conditioned on the $\sigma$-algebra $\mathcal{F}_t$, generated by
$\{\underline{\boldsymbol{\mathcal{Q}}}_0, \mathbf{\underline{Y}}_i,
i<t \}$, i.e. $\mathbb{E}_t[\cdot] =
\mathbb{E}[\cdot|\mathcal{F}_t]$ and $\Pr_t[\cdot]
=\Pr[\cdot|\mathcal{F}_t]$. Define
$R^k_t(\boldsymbol{\gamma}^k,\varphi^i)\triangleq\mathbb{E}_t[Y^k_t(\boldsymbol{\gamma}^k,\varphi^i)]=T_i^k(\gamma^k,\boldsymbol{\mathcal{Q}}^k_t)-\mathcal{Q}^k_t(\varphi^i)-\big(T_r^k(\gamma^k,\boldsymbol{\mathcal{Q}}^k_t)-\mathcal{Q}^k_t(\varphi^r)\big)$,
and $\delta M^k_t(\varphi^i)\triangleq
Y^k_t(\boldsymbol{\gamma}^k,\varphi^i)-\mathbb{E}_t[Y^k_t(\boldsymbol{\gamma}^k,\varphi^i)]$.
Thus, $\delta M^k_t(\varphi^i)$ is the martingale difference noise
satisfying the property that $\mathbb{E}_t[\delta
M^k_t(\varphi^i)]=0$ and $\mathbb{E}[\delta M^k_t(\varphi^i)\delta
M^k_{t{'}}(\varphi^i)]=0$ ($\forall t\neq t{'}$). For some $j$,
define $M^k_t(\varphi^i)=\sum_{l=j}^t \epsilon^q_l \delta
M^k_l(\varphi^i)$. Then, from \eqref{eqn:Q-factor-update}, we have
\begin{align}
\mathcal{Q}^k_{t+1}(\varphi^i)=&\mathcal{Q}^k_t(\varphi^i)+\epsilon^q_t
\big(R^k_t(\boldsymbol{\gamma}^k,\varphi^i)+\delta
M^k_t(\varphi^i)\big)  \nonumber\\
=& \mathcal{Q}^k_j(\varphi^i)+\sum_{l=j}^t \epsilon^q_l
R^k_l(\boldsymbol{\gamma}^k,\varphi^i)+M^k_t(\varphi^i)
\label{eqn:Q-factor-update-accumulate}
\end{align}
Since $\mathbb{E}_t[M^k_t(\varphi^i)]=M^k_{t-1}(\varphi^i)$,
$M^k_t(\varphi^i)$ is a Martingale sequence. By martingale
inequality, we have $\Pr_j \big\{\sup_{j\leq l\leq t}
|M_l^k(\varphi^i)| \geq \lambda \big\} \leq
\frac{\mathbb{E}_j[|M_t^k(\varphi^i)|^2]}{\lambda^2}$. By the
property of martingale difference noise and the condition on the
stepsize sequence, we have
$\mathbb{E}_j[|M^k_t(\varphi^i)|^2]=\mathbb{E}_j[|\sum_{l=j}^t
\epsilon_l^q \delta
M^k_l(\varphi^i)|^2]=\sum_{l=j}^{t}\mathbb{E}_j[(\epsilon_l^q)^2
(\delta M_l^k(\varphi^i))^2]\leq \bar{M}  \sum_{l=j}^{t}
(\epsilon_l^q)^2< \infty$, where $\bar{M}=\max_{j\leq l\leq t}
{(\delta M_l^k(\varphi^i))^2}<\infty$. Hence, we have $\lim_{j
\rightarrow \infty} \Pr_j \big\{\sup_{j\leq l\leq t}
|M_l^k(\varphi^i)| \geq \lambda \big\}\rightarrow 0$. Thus, from
\eqref{eqn:Q-factor-update-accumulate}, we have
$\mathcal{Q}^k_{t+1}(\varphi^i)=\mathcal{Q}^k_j(\varphi^i)+\sum_{l=j}^t
\epsilon^q_l R^k_l(\boldsymbol{\gamma}^k,\varphi^i)$ a.s. with the
vector form
\begin{align}
\boldsymbol{\mathcal{Q}}^k_{t+1}=
\boldsymbol{\mathcal{Q}}^k_j+\sum_{l=j}^t \epsilon^q_l\mathbf{R}^k_l
 \label{eqn:Q-factor-update-accumulate-vector}
\end{align}
where
$\mathbf{R}^k_l=\mathbf{T}^k(\gamma^k,\boldsymbol{\mathcal{Q}}^k_l)-\boldsymbol{\mathcal{Q}}^k_l-\big(T_r^k(\gamma^k,\boldsymbol{\mathcal{Q}}^k_l)-\mathcal{Q}^k_l(\varphi^r)\big)\mathbf{e}
$ and $\mathbf{e}=[1,\cdots,1]^T$ is the $I_{\varphi}\times 1$ unit
vector.

Next, we shall establish the convergence of the dynamic equation in
\eqref{eqn:Q-factor-update-accumulate-vector} after the martingale
noise are averaged out. Let $\mathbf{g}^k_t$ and $P^k_t$ denote the
reward column vector and the transition probability matrix under the
power allocation $\mathbf{p}^k_t$, which attains the minimum of
$\mathbf{T}^k$ of the $t$-th iteration. Denote
$z^k_t=T_r^k(\gamma^k,\boldsymbol{\mathcal{Q}}^k_t)-\mathcal{Q}^k_t(\varphi^r)$.
Then, we have
\begin{align}
\mathbf{R}^k_t=&\mathbf{g}^k_t+P^k_t\boldsymbol{\mathcal{Q}}^k_t-
\boldsymbol{\mathcal{Q}}^k_t-z^k_t \mathbf{e}\nonumber\\
 \leq&
\mathbf{g}^k_{t-1}+P^k_{t-1}\boldsymbol{\mathcal{Q}}^k_t-
\boldsymbol{\mathcal{Q}}^k_t-z^k_t \mathbf{e} \nonumber\\
\mathbf{R}^k_{t-1}=&\mathbf{g}^k_{t-1}+P^k_{t-1}\boldsymbol{\mathcal{Q}}^k_{t-1}-
\boldsymbol{\mathcal{Q}}^k_{t-1}-z^k_{t-1} \mathbf{e}\nonumber\\
\leq& \mathbf{g}^k_t+P^k_t\boldsymbol{\mathcal{Q}}^k_{t-1}-
\boldsymbol{\mathcal{Q}}^k_{t-1}-z^k_{t-1} \mathbf{e} \nonumber
\end{align}
\begin{align}
\Longrightarrow&
\mathbf{A}^k_{t-1}\mathbf{R}^k_{t-1}-(z^k_t-z^k_{t-1})\mathbf{e}
\leq \mathbf{R}^k_t \nonumber\\
&\leq
\mathbf{B}^k_{t-1}\mathbf{R}^k_{t-1}-(z^k_t-z^k_{t-1})\mathbf{e}, \
\forall k\geq 1 \nonumber\\
 \stackrel{\text{by iterating}}{\Longrightarrow}&
\mathbf{A}^k_{t-1}\cdots\mathbf{A}^k_{t-m}\mathbf{q}^k_{t-m}-(z^k_t-z^k_{t-m})\mathbf{e}
\leq \mathbf{R}^k_t \nonumber\\
&\leq
\mathbf{B}^k_{t-1}\cdots\mathbf{B}^k_{t-m}\mathbf{q}^k_{t-m}-(z^k_t-z^k_{t-m})\mathbf{e}
\nonumber
\end{align}
Since $R^k_t(\gamma^k,\varphi^r)
=T_r^k(\gamma^k,\boldsymbol{\mathcal{Q}}^k_t)-\mathcal{Q}^k_t(\varphi^r)
-\big(T_r^k(\gamma^k,\boldsymbol{\mathcal{Q}}^k_t)-\mathcal{Q}^k_t(\varphi^r)
\big)=0$  $\forall t$, by  \eqref{promatrix-cond-dis}, we have
\begin{align}
&(1-\delta_m) \min_{i'}
R^k_{t-m}(\gamma^k,\varphi^{i'})-(z^k_t-z^k_{t-m})\leq
R^k_t(\gamma^k,\varphi^i) \nonumber\\
&\leq (1-\delta_m) \max_{i'}
R^k_{t-m}(\gamma^k,\varphi^{i'})-(z^k_t-z^k_{t-m}) \forall \,i
\nonumber \\
\Rightarrow& \begin{cases} \min_{i'}R^k_t(\gamma^k,\varphi^{i'})
\geq (1-\delta_m)
\min_{i'} R^k_{t-m}(\gamma^k,\varphi^{i'})\\ \quad \quad \quad \quad \quad \quad \quad \quad \quad -(z^k_t-z^k_{t-m}) &\\
\max_{i'} R^k_t(\gamma^k,\varphi^{i'}) \leq (1-\delta_m) \max_{i'}
R^k_{t-m}(\gamma^k,\varphi^{i'})\\\quad \quad \quad \quad\quad \quad
\quad \quad \quad-(z^k_t-z^k_{t-m}) &
\end{cases} \nonumber\end{align}
\begin{align}
\Rightarrow& \max_{i'} R^k_t(\gamma^k,\varphi^{i'})- \min_{i'}
R^k_t(\gamma^k,\varphi^{i'})\nonumber\\
& \leq (1-\delta_m)\big(\max_{i'} R^k_{t-m}(\gamma^k,\varphi^{i'})-
\min_{i'}
R^k_{t-m}(\gamma^k,\varphi^{i'})\big) \nonumber\\
\Rightarrow& \max_{i'} R^k_t(\gamma^k,\varphi^{i'})- \min_{i'}
R^k_t(\gamma^k,\varphi^{i'}) \leq \phi_j \prod_{l=1}^{\lfloor
\frac{t-j}{m} \rfloor}(1-\delta_{j+lm}) \nonumber
\end{align}
where $\phi_j>0$. Since $R^k_t(\gamma^k,\varphi^r)=0$ $\forall t$,
we have $\max_{i'} R^k_t(\gamma^k,\varphi^{i'})\geq0$ and $\min_{i'}
R^k_t(\gamma^k,\varphi^{i'})\leq 0$. Thus, $\forall \, i$, we have
$|R^k_t(\gamma^k,\varphi^i)| \leq \max_{i'}
R^k_t(\gamma^k,\varphi^{i'})- \min_{i'} R^k_t(\gamma^k,\varphi^{i'})
\leq \phi_j \prod_{l=1}^{\lfloor \frac{t-j}{m}
\rfloor}(1-\delta_{j+lm}) $. Therefore, as $t \rightarrow \infty$,
$\mathbf{R}^k_t \rightarrow
 \mathbf{0}$, i.e. $\boldsymbol{\mathcal{Q}}^k_{\infty}(\boldsymbol{\gamma})$ satisfies equation
in \eqref{bellmaneqn-wp1-dis-fixed-gamma}. Similar to the potential
function of Bellman equation (Proposition 1 in Chapter 7 of
\cite{Bertsekas:2007}), the solution to
\eqref{bellmaneqn-wp1-dis-fixed-gamma} is unique only up an additive
constant. Since
$\mathcal{Q}^k_t(\varphi^r)=\mathcal{Q}^k_0(\varphi^r)$ $\forall t$,
we have the convergence of the per-user subband allocation Q-factor
$\lim_{l\rightarrow
\infty}\boldsymbol{\mathcal{Q}}^k_t=\boldsymbol{\mathcal{Q}}^k_{\infty}(\boldsymbol{\gamma})$
almost surely.

\section*{Appendix C: Proof of Lemma \ref{Lem:convergence-dis-gamma}}

Due to the separation of time scale, the primal update of the
Q-factor can be regarded as converged to
$\boldsymbol{\mathcal{Q}}^k_{\infty}(\boldsymbol{\gamma}_t)$ w.r.t.
the current LMs $\boldsymbol{\gamma}_t$
\cite{Borkartwotimescales:1997}. Using standard stochastic
approximation theorem \cite{Borkarbook:2008}, the dynamics of the
LMs update equation in \eqref{eqn:gamma-dis-est1} and
\eqref{eqn:gamma-dis-est1-under} can be represented by the following
ODE:
\begin{align}
\dot{\boldsymbol{\gamma}}(t)=&\mathbb{E}^{\Omega^*(\boldsymbol{\gamma}(t))}\big[(\sum_n
p_{1,n} - P_1),(\mathbf{1}[Q_k=N_Q] - P^d_1),\cdots,\nonumber\\
& (\sum_n p_{K,n} - P_K), (\mathbf{1}[Q_K=N_Q]-P^d_K)
\big]^T\label{eqn:ode-LM}
\end{align}
where
$\Omega^*(\boldsymbol{\gamma}(t))=\big(\Omega^*_p(\boldsymbol{\gamma}(t)),
\Omega^*_s(\boldsymbol{\gamma}(t))\big)$ is the converged control
policies in \eqref{eqn:general-p} and \eqref{eqn:general-s} w.r.t.
the current LM $\boldsymbol{\gamma}(t)$, and
$\mathbb{E}^{\Omega^*(\boldsymbol{\gamma}(t))}[\cdot]$ denotes the
expectation w.r.t. the measure induced by
$\Omega^*(\boldsymbol{\gamma})$. Define
$G(\boldsymbol{\gamma})=\mathbb{E}^{\Omega^*(\boldsymbol{\gamma})}\big[\sum_k
g_k(\boldsymbol{\gamma}^k,\boldsymbol{\chi}^i_k,\mathbf{s}_k,\mathbf{p}_k)
\big]$. Since subband allocation  policy is discrete, we have
$\Omega_s^*(\boldsymbol{\gamma})=\Omega_s^*(\boldsymbol{\gamma}+\boldsymbol{\delta}_{\gamma})$.
Hence, by chain rule, we have $\frac{\partial G}{\partial \overline
\gamma^k}=\sum_{k,n}\frac{\partial G}{\partial
p^*_{k,n}}\frac{\partial p^*_{k,n}}{\partial\overline \gamma^k}+
\mathbb{E}^{\big(\Omega_p^*(\boldsymbol{\gamma}),\Omega_
s^*(\boldsymbol{\gamma})\big)}[ \sum_n p^*_{k,n} - P_k]$. Since
$\Omega_p^*(\boldsymbol{\gamma}) =\arg \min_{\Omega_{
p}(\boldsymbol{\gamma})}\mathbb{E}^{\big(\Omega_s^*(\boldsymbol{\gamma}),\Omega_{
p}(\boldsymbol{\gamma})\big)}[\sum_k
g_k(\boldsymbol{\gamma}^k,\boldsymbol{\chi}^i_k,\mathbf{s}^*_k,\mathbf{p}_k)]$,
we have $\frac{\partial G}{\partial \overline \gamma^k}=0+
\mathbb{E}^{\big(\Omega_p^*(\boldsymbol{\gamma}),\Omega_{
s}^*(\boldsymbol{\gamma})\big)}\big[ \sum_n p^*_{k,n} -
P_k\big]=\dot{\overline \gamma^k}(t)$. Similarly, $\frac{\partial
G}{\partial \underline \gamma^k}=
\mathbb{E}^{\big(\Omega_p^*(\boldsymbol{\gamma}),\Omega_{
s}^*(\boldsymbol{\gamma})\big)}\big[
\mathbf{1}[Q_k=N_Q]-P^d_k\big]=\dot{\underline \gamma^k}(t)$.
Therefore, we show that the ODE in \eqref{eqn:ode-LM} can be
expressed as $\dot{\boldsymbol \gamma}(t)= \triangledown
G(\boldsymbol{\gamma}(t))$. As a result, the ODE in
\eqref{eqn:ode-LM} will converge to $\triangledown
G(\boldsymbol{\gamma})=0$, which corresponds to \eqref{eqn:tx-pwr2}
and \eqref{eqn:ptdrop2}.

\section*{Appendix D: Proof of Lemma \ref{Lem:reduced-MDP-opt-control-per-user-per-subband-q-factor}}
\begin{align}
\text{Let }q^k(Q_k,|H_{k,n}|,s_{k,n})&=
\min_{p_{k,n}}\big\{g_{k,n}(\boldsymbol{\gamma}^k,Q_k,
|H_{k,n}|,s_{k,n},p_{k,n})\nonumber\\
&
-\frac{\Delta\widetilde{W}^k(Q^i_k)\tau}{\overline{N_k}}s_{k,n}\log(1
+
p_{k,n}|H_{k,n}|^2)\nonumber\\
&+\frac{\mathbb{E}[\widetilde{W}^k(Q^i_k+A_k)|Q_k]}{N_F}
-\frac{\theta^k}{N_F}\big\}\label{eqn:q-factor-define}
\end{align}
where $\widetilde{W}^k(Q_k)\triangleq
\mathbb{E}[W^k(\boldsymbol{\chi}_k)|Q_k]$ and $\Delta
\widetilde{W}^k(Q_k)=\mathbb{E}[\widetilde{W}^k(Q_k+A_k)-\widetilde{W}^k(Q_k+A_k-1)
|Q_k]$. Then, we have
$\mathcal{Q}^k(\boldsymbol{\chi}_k,\mathbf{s}_k)=\sum_n
q^k(Q_k,|H_{k,n}|,s_{k,n})$. Thus, we can derive
\begin{align}
&W^k(\boldsymbol{\chi}_k)=\mathbb{E}\big[
\mathcal{Q}^k\big(\boldsymbol{\chi}_k,\{s_{k,n}
=\mathbf{1}[|H_{k,n}|\geq H^*_{K-1}]\}\big)|\boldsymbol{\chi}_k
\big]\nonumber\\
=&\mathbb{E}\big[\sum_n
q^k(Q_k,|H_{k,n}|,s_{k,n}=\mathbf{1}[|H_{k,n}|\geq H^*_{K-1}])
|\boldsymbol{\chi}_k \big]\nonumber\\
=&\sum_n \underbrace{\mathbb{E}\big[
q^k(Q_k,|H_{k,n}|,s_{k,n}=\mathbf{1}[|H_{k,n}|\geq H^*_{K-1}]) |Q_k,
H_{k,n} \big]}_{w^k(Q_k,|H_{k,n}|)}\nonumber \\
\Rightarrow& \widetilde{W}^k(Q_k)=
\mathbb{E}[W^k(\boldsymbol{\chi}_k)|Q_k]=\mathbb{E}[\sum_n w^k(Q_k,|H_{k,n}| )|Q_k]\nonumber\\
=&\sum_n \underbrace{\mathbb{E}[w^k(Q_k,|H_{k,n}|
)|Q_k]}_{\widetilde{w}^k(Q_k)}=N_F
\widetilde{w}^k(Q_k)\nonumber\\
\Rightarrow &\Delta
\widetilde{W}^k(Q_k)=\mathbb{E}[\widetilde{W}^k(Q_k+A_k)-\widetilde{W}^k(Q_k+A_k-1)|Q_k]\nonumber\\
=&N_F\underbrace{\mathbb{E}[\widetilde{w}^k(Q_k+A_k)-\widetilde{w}^k(Q_k+A_k-1)|Q_k]}_{\delta\widetilde{w}^k(Q_k)}\nonumber
\end{align}
Therefore, from \eqref{eqn:q-factor-define}, we can obtain
\eqref{eqn:per-user-per-subb-q-factor}.

\section*{Appendix E: Proof of Lemma \ref{Lem:per-user-power-allocation}}
The conditional transition probability of user $k$ is given by
$\Pr[\boldsymbol{\chi}^j_k|\boldsymbol{\chi}^i_k,\mathbf{s}_k,\mathbf{p}_k]=\Pr[\mathbf{H}^j_k]\Pr[Q^j_k|\boldsymbol{\chi}^i_k,
\mathbf{s}_k, \mathbf{p}_k]$, where
$\Pr[Q^j_k|\boldsymbol{\chi}^i_k, \mathbf{s}_k,
\mathbf{p}_k]=\Pr[A_k=Q^j_k-Q^i_k+1]\mu_k(\boldsymbol{\chi}^i_k,\mathbf{s}_k,
\mathbf{p}_k) \tau +
\Pr[A_k=Q^j_k-Q^i_k]\big(1-\mu_k(\boldsymbol{\chi}^i_k,\mathbf{s}_k,
\mathbf{p}_k)\tau \big)$.
\begin{align}
&\mathcal{Q}^k(\boldsymbol{\chi}^i_k,\mathbf{s}_k)\nonumber\\
\stackrel{(a)}{=}&\min_{\mathbf{p}_k} \Big[
g_k(\boldsymbol{\gamma}^k,\boldsymbol{\chi}^i_k,\mathbf{s}_k,\mathbf{p}_k)\nonumber\\
& \quad \quad + \sum_{\mathbf{H}^j_k,Q^j_k}
\Pr[\mathbf{H}^j_k]\Pr[Q^j_k|\boldsymbol{\chi}^i_k, \mathbf{s}_k,
\mathbf{p}_k] W^k(\boldsymbol{\chi}^j_k)\Big]-\theta^k \nonumber\\
\stackrel{(b)}{=}&\min_{\mathbf{p}_k} \Big[
g_k(\boldsymbol{\gamma}^k,\boldsymbol{\chi}^i_k,\mathbf{s}_k,\mathbf{p}_k)\nonumber\\
& \quad \quad + \sum_{Q^j_k} \Pr[Q^j_k|\boldsymbol{\chi}^i_k,
\mathbf{s}_k,
\mathbf{p}_k] \widetilde{W}^k(Q^j_k)\Big]-\theta^k \nonumber\\
=&\min_{\mathbf{p}_k} \Big[
g_k(\boldsymbol{\gamma}^k,\boldsymbol{\chi}^i_k,\mathbf{s}_k,\mathbf{p}_k)\nonumber\\
& \quad \quad + \big(1-\mu_k(\boldsymbol{\chi}^i_k,\mathbf{s}_k,
\mathbf{p}_k)\tau \big)\mathbb{E}[\widetilde{W}^k(Q^i_k+A_k)|Q_k]\nonumber\\
& \quad \quad +\mu_k(\boldsymbol{\chi}^i_k,\mathbf{s}_k,
\mathbf{p}_k) \tau
\mathbb{E}[\widetilde{W}^k(Q^i_k+A_k-1)|Q_k]\Big]-\theta_k\nonumber\\
\stackrel{(d)}{\Leftrightarrow}& \min_{\mathbf{p}_k}
\overline{\gamma}^k \sum_n p_{k,n} - \frac{\Delta
\widetilde{W}^k(Q_k)\tau}{\overline{N_k}}\big(\sum_n s_{k,n}\log(1 +
p_{k,n}|H_{k,n}|^2)\big) \label{eqn:max-R.H.S.}
\end{align}
where (a) is due to \eqref{eqn:Bellman-per-user-q-factor} and the
above per-user transition probability, (b) is due to the definition
$\widetilde{W}^k(Q_k)\triangleq
\mathbb{E}[W^k(\boldsymbol{\chi}_k)|Q_k]$ and (d) is due to the
definition $\Delta
\widetilde{W}^k(Q_k)=\mathbb{E}[\widetilde{W}^k(Q_k+A_k)-\widetilde{W}^k(Q_k+A_k-1)
|Q_k]$. By applying standard convex optimization techniques and
Lemma
\ref{Lem:reduced-MDP-opt-control-per-user-per-subband-q-factor}
($\Delta \widetilde{W}^k(Q_k)=N_F\delta\widetilde{w}^k(Q_k)$), the
optimal solution to \eqref{eqn:max-R.H.S.} is given by
\eqref{eqn:joint-pwr-allo}.


\section*{Appendix F: Proof of Lemma \ref{Lem:order-of-growth}}
We first fix $K$ and consider the growth of the ergodic visiting
speed w.r.t. $N$. As $N$ increases, the number of per-user
per-subband state-action pair observations made at each time slot
increases (this``parallelism'' helps to speed up the convergence
rate). Thus, the chance that all per-user per-subband state-action
pair of each user are visited grows like $\mathcal O(N)$, and hence,
the ergodic visiting speed of each user grows like $\mathcal O(N)$.
Next, we fix N and consider the growth of the ergodic visiting speed
w.r.t. $K$. Each subband can only be allocated to one user. Thus,
the chance of the bottleneck state-action pair with $s=1$ for each
user being visited decreases like $\mathcal O(K)$, and hence, the
ergodic visiting speed of each user grows like $\mathcal O(1/K)$.
Combine the above two cases, we conclude Lemma
\ref{Lem:order-of-growth}.

\section*{Appendix G: Proof of Theorem \ref{Thm:asymp-performance-dis-alg}}

For given $\boldsymbol{\gamma}$, we shall prove that under a {\em
Best-CSI subband allocation policy}, the Q-factor satisfying the
Bellman equation \eqref{eqn:Bellman-q-factor} can be decomposed into
the additive form in \eqref{eqn:approximate-Q}. Based on that, we
shall show that for large $K$, the linear Q-factor approximation in
\eqref{eqn:approximate-Q} is indeed optimal.



\begin{Def}\label{def semi-global}[Best-CSI Subband Allocation Policy] A {\em Best-CSI subband allocation policy} is defined as  $\widetilde\Omega_s(\mathbf{H})=\{\tilde{s}_{k,n}(\mathbf{H}_n)\in\{0,1\}|\sum_{k=1}^{K}\tilde{s}_{k,n}=1 \,
\forall n\}$, where
\begin{align}
\widetilde{s}_{k,n}(\mathbf{H}_n)=&\mathbf{1}[|H_{k,n}|=
\max_j|H_{j,n}|]\nonumber\\
=&\mathbf{1}[|H_{k,n}|\geq \max_{j\neq
k}|H_{j,n}|]\label{eqn:best-H-subband}
\end{align}
\end{Def}
We first establish a property of the Q-factor in the original
Bellman equation in \eqref{eqn:Bellman-q-factor} under the {\em
Best-CSI subband allocation policy}, which is summarized in Lemma
\ref{Lem:additive-property}.

\begin{Lem}(\emph{Additive Property of the Subband Allocation Q-Factor})
Under the {\em Best-CSI subband allocation policy}, the solution to
the original Bellman equation in \eqref{eqn:Bellman-q-factor} can be
expressed into the form
$\mathcal{Q}(\boldsymbol{\chi},\mathbf{s})=\sum_k
\mathcal{Q}^k_{\infty}(\boldsymbol{\chi}_k,\mathbf{s}_k)$, where
$\{\mathcal{Q}^k_{\infty}(\boldsymbol{\chi}_k,\mathbf{s}_k)\}$ is
the converged per-user Q-factor, which is also the  solution of the
$k$-th user's {\em per-user subband allocation Q-factor fixed point
equation} given by \eqref{eqn:Bellman-per-user-q-factor}.
\label{Lem:additive-property}
\end{Lem}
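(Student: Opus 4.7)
The plan is to verify the additive ansatz $\mathcal{Q}(\boldsymbol{\chi},\mathbf{s})=\sum_k \mathcal{Q}^k_{\infty}(\boldsymbol{\chi}_k,\mathbf{s}_k)$ by direct substitution into the policy-evaluation version of the original Bellman equation \eqref{eqn:Bellman-q-factor}, obtained by freezing the subband allocation to the Best-CSI rule $\widetilde{\Omega}_s$. Under this frozen policy, the inner $\min_{\mathbf{s}'}$ in \eqref{eqn:Bellman-q-factor} is simply replaced by the evaluation at $\widetilde{\mathbf{s}}(\mathbf{H}^j)=\{\widetilde{s}_{k,n}(\mathbf{H}_n^j)\}$ defined in \eqref{eqn:best-H-subband}, so only the power minimization $\min_{\Omega_p}$ remains on the right-hand side.

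I would then exploit three structural decompositions simultaneously. First, the per-stage Lagrangian splits as $g(\boldsymbol{\gamma},\boldsymbol{\chi},\mathbf{s},\mathbf{p})=\sum_k g_k(\boldsymbol{\gamma}^k,\boldsymbol{\chi}_k,\mathbf{s}_k,\mathbf{p}_k)$ directly from its definition. Second, by Assumption~\ref{Asump:H} and \eqref{general-vec-q-trans-prob}--\eqref{eqn:transition-prob1}, the transition kernel factorizes as $\Pr[\boldsymbol{\chi}^j|\boldsymbol{\chi}^i,\mathbf{s},\mathbf{p}]=\Pr[\mathbf{H}^j]\prod_k \Pr[Q_k^j|Q_k^i,\mathbf{s}_k,\mathbf{p}_k]$, so each user's queue dynamics depends only on its own allocation and power. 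Third, the Best-CSI rule is intrinsically local: $\widetilde{s}_{k,n}(\mathbf{H}_n)=\mathbf{1}[|H_{k,n}|\geq \max_{j\neq k}|H_{j,n}|]=\mathbf{1}[|H_{k,n}|\geq H^*_{K-1}]$ in distribution, since the channels of the other $K-1$ users are i.i.d. across users.

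Plugging the ansatz into the frozen Bellman equation and applying these three decompositions, the next-state term becomes
\begin{align}
&\sum_{\boldsymbol{\chi}^j}\Pr[\boldsymbol{\chi}^j|\boldsymbol{\chi}^i,\mathbf{s},\mathbf{p}]\sum_k \mathcal{Q}^k_{\infty}\bigl(\boldsymbol{\chi}_k^j,\widetilde{\mathbf{s}}_k(\mathbf{H}_k^j)\bigr)\nonumber\\
&=\sum_k \sum_{Q_k^j}\Pr[Q_k^j|Q_k^i,\mathbf{s}_k,\mathbf{p}_k]\,W^k(\boldsymbol{\chi}_k^j),\nonumber
\end{align}
where the expectation over the other users' channels is collapsed into $W^k$ exactly as in the definition below \eqref{eqn:Bellman-per-user-q-factor}. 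Because the objective and the per-user constraints in $\min_{\mathbf{p}}$ now fully decouple across $k$, the joint power minimization separates as $\min_{\mathbf{p}}=\sum_k \min_{\mathbf{p}_k}$, and setting $\theta=\sum_k \theta^k$ reduces the aggregate Bellman equation termwise to the $K$ per-user fixed-point equations \eqref{eqn:Bellman-per-user-q-factor}. Lemma~\ref{Lem:convergence-dis-fixed-gamma} identifies $\mathcal{Q}^k_{\infty}$ as a solution of each, so the ansatz is indeed a solution, and by uniqueness up to an additive constant it is the Q-factor induced by $\widetilde{\Omega}_s$.

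The main obstacle I anticipate is the third decomposition step: justifying rigorously that the conditional expectation $\mathbb{E}[\mathcal{Q}^k_{\infty}(\boldsymbol{\chi}_k^j,\widetilde{\mathbf{s}}_k(\mathbf{H}_k^j))\mid \boldsymbol{\chi}_k^j]$ collapses to $W^k(\boldsymbol{\chi}_k^j)$. This requires that, conditional on $(Q_k^j,\mathbf{H}_k^j)$, the other users' channels are independent of user $k$'s state and identically distributed according to $\Pr[|H|]$, so that the order statistic $H^*_{K-1}$ in $W^k$ is precisely the conditional distribution of $\max_{j\neq k}|H_{j,n}^j|$. Everything else is a routine bookkeeping exercise once the three decompositions are in place.
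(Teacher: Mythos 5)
Your proposal is correct and follows essentially the same route as the paper's Appendix~G: freeze the subband allocation at the Best-CSI rule, substitute the additive ansatz into the resulting policy-evaluation Bellman equation, use the additivity of $g$, the per-user structure of the queue transitions, and the order-statistic identity $\max_{j\neq k}|H_{j,n}|\sim H^*_{K-1}$ to collapse the next-state term into $\sum_k W^k(\boldsymbol{\chi}_k^j)$, separate the power minimization, set $\theta=\sum_k\theta^k$, and invoke uniqueness up to an additive constant (the paper merely interposes the CSI-averaged quantity $\widetilde{V}(\mathbf{Q})=\sum_k\widetilde{W}^k(Q_k)$ and exhibits the $K$ decoupled Poisson systems in $\{\theta^k,\widetilde{W}^k(Q_k)\}$ explicitly). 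One imprecision worth fixing: since this lemma lives in the exponential-packet-size setting, the joint queue kernel is the single-departure mixture in \eqref{vec-q-trans-prob} and does \emph{not} factorize as $\prod_k\Pr[Q_k^j|Q_k^i,\mathbf{s}_k,\mathbf{p}_k]$ as you claim; what you actually need, and what does hold, is that the marginal of $Q_k^j$ depends only on $(Q_k^i,\mathbf{s}_k,\mathbf{p}_k)$ and that linearity of expectation suffices to split $\sum_{\mathbf{Q}^j}\Pr[\mathbf{Q}^j|\cdot]\sum_k\widetilde{W}^k(Q_k^j)$ across users --- this is exactly the cancellation the paper performs when computing $\Delta_k\widetilde{V}(\mathbf{Q})=\Delta\widetilde{W}^k(Q_k)$. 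With that substitution your argument is complete.
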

\begin{proof}
Under the {\em Best-CSI subband allocation policy}, the Bellman
equation  in \eqref{eqn:Bellman-q-factor}  becomes
\begin{align}
&\mathcal{Q}(\boldsymbol{\chi}^i,\mathbf{s})
\stackrel{(a)}{=}\min_{\Omega_p(\boldsymbol{\chi}^i)} \Big[
g(\boldsymbol{\gamma},\boldsymbol{\chi}^i,\mathbf{s},\Omega_p(\boldsymbol{\chi}^i))\nonumber\\
&+\sum_{\mathbf{Q}^j}
\Pr[\mathbf{Q}^j|\boldsymbol{\chi}^i,\mathbf{s},\Omega_p(\boldsymbol{\chi}^i)]\underbrace{\big(\sum_{\mathbf{H}^j}\Pr[\mathbf{H}^j]\mathcal{Q}(\boldsymbol{\chi}^j,\widetilde\Omega_s(\mathbf{H}^j))\big)}_{\widetilde{V}(\mathbf{Q}^j)}\Big]-\theta\nonumber\\
&\quad \quad \quad \quad \quad \quad \quad \quad \quad \forall
\,1\leq i\leq I_{\chi}, \ \forall \mathbf{s}\label{eqn:additive-1}
\end{align}
\begin{align}
\stackrel{(b)}{\Leftrightarrow}&
\widetilde{V}(\mathbf{Q}^i)=\sum_{\mathbf{H}^i}\Pr[\mathbf{H}^i]\min_{\Omega_p(\boldsymbol{\chi}^i)}
\Big[
g(\boldsymbol{\gamma},\boldsymbol{\chi}^i,\widetilde\Omega_s(\mathbf{H}^i),\Omega_p(\boldsymbol{\chi}^i))\nonumber\\
&+ \sum_{\mathbf{Q}^j}
\Pr[\mathbf{Q}^j|\boldsymbol{\chi}^i,\widetilde\Omega_s(\mathbf{H}^i),\Omega_p(\boldsymbol{\chi}^i)]\widetilde{V}(\mathbf{Q}^j)\Big]-\theta,
\ 1\leq i\leq I_{Q}\label{eqn:reduced-q-factor}
\end{align}
where (a) is due to \eqref{eqn:transition-prob1} and the definition
$\widetilde{V}(\mathbf{Q}) \triangleq
\mathbb{E}[\mathcal{Q}(\boldsymbol{\chi},\widetilde\Omega_s(\mathbf{H}))|\mathbf{Q}]$,
(b) is obtained by taking conditional expectation (conditioned on
$\mathbf{Q}^i$) on both sides of \eqref{eqn:additive-1} and the
definition of $\widetilde{V}(\mathbf{Q})$. In addition, denote
$\Delta_k
\widetilde{V}(\mathbf{Q})\triangleq\mathbb{E}[\widetilde{V}(\mathbf{Q}+\mathbf{A})-\widetilde{V}(\mathbf{Q}+\mathbf{A}-\mathbf{e}_k)|\mathbf{Q}]$.

From \eqref{eqn:additive-1}, we know that
$\{\mathcal{Q}(\boldsymbol{\chi}^i,\mathbf{s})\}$ is determined by
$\{\widetilde{V}(\mathbf{Q}^i)\}$. Next, we shall try to solve
$\{\widetilde{V}(\mathbf{Q}^i)\}$ by the $I_Q$ equations in
\eqref{eqn:reduced-q-factor}. First, assume the linear approximation
$\mathcal{Q}(\boldsymbol{\chi},\widetilde\Omega_s(\mathbf{H}))=
\sum_k \mathcal{Q}^k(\boldsymbol{\chi}_k,
\widetilde\Omega_s^k(\mathbf{H}))$ holds under the best-CSI subband
allocation policy, we have 
\begin{align}
&\widetilde{V}(\mathbf{Q}) = \mathbb{E}[\sum_k
\mathcal{Q}^k(\boldsymbol{\chi}_k,\widetilde\Omega_s^k(\mathbf{H}))|\mathbf{Q}]\nonumber\\
=&\sum_k \mathbb{E}[
\mathcal{Q}^k(Q_k,\mathbf{H}_k,\widetilde\Omega_s^k(\mathbf{H}))|\mathbf{Q}]\nonumber\\
=&\sum_k \mathbb{E}[
\mathcal{Q}^k(Q_k,\mathbf{H}_k,\widetilde\Omega_s^k(\mathbf{H}))|Q_k]\nonumber\\
=& \sum_k \mathbb{E}\Big[\mathbb{E}\big[
\mathcal{Q}^k\big(Q_k,\mathbf{H}_k,\{\widetilde{s}_{k,n}=\mathbf{1}[|H_{k,n}|\geq
\max_{j\neq k}|H_{j,n}|]\}\big)\nonumber\\
&\quad \quad \quad |Q_k,\mathbf{H}_k\big]|Q_k \Big]=\sum_k
\mathbb{E}[W^k(\boldsymbol{\chi}_k)|Q_k]=\sum_k
\widetilde{W}^k(Q_k)\nonumber
\end{align}
\begin{align}
\Delta_k \widetilde{V}(\mathbf{Q})=&\mathbb{E}[\sum_j
\widetilde{W}^j(Q_j+A_j)-\big(\sum_{j\neq k}
\widetilde{W}^j(Q_j+A_j)\nonumber\\
&+\widetilde{W} ^k(Q_k+A_k-1)\big) |\mathbf{Q}]=\Delta
\widetilde{W}^k(Q_k)\nonumber
\end{align}
Thus, the optimal power allocation and corresponding conditional
departure rate to $\min_{\Omega_p(\boldsymbol{\chi}^i)} [\cdot ]$
part in \eqref{eqn:reduced-q-factor} are as follows
\begin{align}
&p_{k,n}(Q_k,|H_{k,n}|,\widetilde{s}_{k,n}(\mathbf{H}_n)),\ \forall k,n\label{eqn:best-H-opt-power-Qk}\\
=&\widetilde{s}_{k,n}(\mathbf{H}_n)\Big(\frac{\frac{\tau}{\overline{N}_k}
\Delta
\widetilde{W}^k(Q_k)}{\overline{\gamma}^k}-\frac{1}{|H_{k,n}|^2}\Big)^{+}\nonumber
\\
&\mu_k(Q_k,\mathbf{H}_k,\widetilde{\mathbf{s}}_k(\mathbf{H})),\ \forall k\label{eqn:best-H-opt-depart-rate-Qk}\\
=&\frac{1}{\overline{N_k}}\sum_n
\widetilde{s}_{k,n}(\mathbf{H}_n)\log(1+ p_{k,n}(Q_k,
|H_{k,n}|,\widetilde{s}_{k,n}(\mathbf{H}_n))|H_{k,n}|^2)\nonumber
\end{align}
Therefore, from \eqref{eqn:reduced-q-factor}, we have
\begin{align}
&\sum_k \widetilde{W}^k(Q^i_k)=\sum_k \Big(
\widetilde{g}_k(\boldsymbol{\gamma}^k,Q^i_k)+\mathbb{E}[\widetilde{W}^k(Q^i_k+A_k)|Q^i_k]\nonumber\\
& \quad \quad \quad \quad \quad \quad \quad -
\widetilde{\mu}_k(Q^i_k)\tau \Delta \widetilde{W}^k(Q^i_k)
\Big)-\theta\nonumber\\
\Rightarrow& \theta=\sum_k \theta^k =\sum_k\Big(
\widetilde{g}_k(\boldsymbol{\gamma}^k,Q^i_k)+\mathbb{E}[\widetilde{W}^k(Q^i_k+A_k)|Q^i_k]\nonumber\\
&\quad \quad \quad \quad \quad \quad  -
\widetilde{\mu}_k(Q^i_k)\tau \Delta
\widetilde{W}^k(Q^i_k)- \widetilde{W}^k(Q^i_k) \Big),\nonumber\\
& \quad \quad \quad \quad \quad \quad \quad \quad  \quad \quad \quad
\quad 1\leq i\leq I_{Q}\label{eqn:decoupled-structure}
\end{align}
where $\widetilde{g}_k(\boldsymbol{\gamma}^k,Q^i_k)=\mathbb{E}\big[
\beta_k f(Q_k)+\overline{\gamma}^k (\sum_n p_{k,n}(Q_k,
|H_{k,n}|,\widetilde{s}_{k,n}(\mathbf{H}_n))-P_k)
+\underline{\gamma}^k(\mathbf{1}[Q^i_k=N_Q]-P_k^d)|Q_k\big]$ and
$\widetilde{\mu}_k(Q_k)=\mathbb{E}\big[\mu_k(Q_k,\mathbf{H}_k,\widetilde{\mathbf{s}}_k(\mathbf{H}))|Q_k\big]$.
Since there are only $(N_Q+1)$ QSI states for each user and the
structure in \eqref{eqn:decoupled-structure} is decoupled  under the
additive assumption, for each user $k$, there are only $(N_Q+1)$
independent Poisson equations with $N_Q+2$ unknowns $\{\theta^k,
\widetilde{W}^k(Q_k)\}$. $\theta_k$ is unique and
$\{\widetilde{W}^k(Q_k)\}$ is unique up to an additive
constant\cite{Bertsekas:2007}. Therefore, $\{\theta,
\widetilde{V}(\mathbf{Q})\}$ is the solution to
\eqref{eqn:reduced-q-factor}, where $\theta=\sum_k \theta^k$ and
$\widetilde{V}(\mathbf{Q})=\sum_k\widetilde{W}^k(Q_k) $.

Next, we shall show
$\mathcal{Q}(\boldsymbol{\chi},\mathbf{s})=\sum_k
\mathcal{Q}^k_{\infty}(\boldsymbol{\chi}_k,\mathbf{s}_k)$.
Substitute $\theta=\sum_k \theta^k$ and
$\widetilde{V}(\mathbf{Q})=\sum_k\widetilde{W}^k(Q_k) $ into
\eqref{eqn:additive-1}, we have
\begin{align}
\mathcal{Q}(\boldsymbol{\chi}^i,\mathbf{s})
=&\min_{\Omega_p(\boldsymbol{\chi}^i)} \Big[
g(\boldsymbol{\gamma},\boldsymbol{\chi}^i,\mathbf{s},\Omega_p(\boldsymbol{\chi}^i)) \nonumber \\
&+\sum_{\mathbf{Q}^j}
\Pr[\mathbf{Q}^j|\boldsymbol{\chi}^i,\mathbf{s},\Omega_p(\boldsymbol{\chi}^i)]\big(\sum_k\widetilde{W}^k(Q^j_k)\big)\Big]-\sum_k
\theta^k\nonumber\\
=&\sum_k \mathcal{Q}^k(\boldsymbol{\chi}^i_k,\mathbf{s}_k)\nonumber
\end{align}
where
$\mathcal{Q}^k(\boldsymbol{\chi}^i_k,\mathbf{s}_k)=\min_{\mathbf{p}_k}
\Big[
g_k(\boldsymbol{\gamma}^k,\boldsymbol{\chi}^i_k,\mathbf{s}_k,\mathbf{p}_k)
+ \sum_{Q^j_k}
\Pr[Q^j_k|\boldsymbol{\chi}^i_k,\mathbf{s}_k,\mathbf{p}_k]
\widetilde{W}^k(Q^j_k)\Big]-\theta^k$, which is equivalent to
\eqref{eqn:Bellman-per-user-q-factor}. By Lemma
\ref{Lem:convergence-dis-fixed-gamma}, the converged
$\{\mathcal{Q}^k_{\infty}(\boldsymbol{\chi}_k,\mathbf{s}_k)\}$
satisfy  \eqref{eqn:Bellman-per-user-q-factor} and this completes
the proof.
\end{proof}

Next, we shall consider the asymptotic subband allocation results
for large $K$. The optimal control actions to
\eqref{eqn:Bellman-q-factor} are given by
\begin{align}
p_{k,n}(\mathbf{H}_n, \mathbf{Q})=&s_{k,n}(\mathbf{H}_n,
\mathbf{Q})\Big(\frac{\frac{\tau}{\overline{N}_k}
\Delta_k \widetilde{V}^*(\mathbf{Q})}{\overline{\gamma}^k}-\frac{1}{|H_{k,n}|^2}\Big)^{+}\label{eqn:joint-pwr-allo-c}\\
s_{k,n}(\mathbf{H}_n, \mathbf{Q}) =& \left\{
\begin{array}{ll} 1, &
\text{if} \quad X_{k,n}=\max_{j} \big\{X_{j,n}\big\} >0\\
0, & \textrm{otherwise}
\end{array} \right. \label{eqn:joint-subc-allo}
\end{align}
where $\widetilde{V}^*(\mathbf{Q}) \triangleq
\mathbb{E}[\min_{\mathbf{s}}
\mathcal{Q}^*(\boldsymbol{\chi},\mathbf{s})|\mathbf{Q}]$,
$\Delta_k\widetilde{V}^*(\mathbf{Q})\triangleq\mathbb{E}[\widetilde{V}^*(\mathbf{Q}+\mathbf{A})-\widetilde{
V}^*(\mathbf{Q}+\mathbf{A}-\mathbf{e}_k)|\mathbf{Q}]$ and
$X_{k,n}=\frac{\tau}{\overline{N}_k}\Delta_k
\widetilde{V}^*(\mathbf{Q})
\log\Big(1+|H_{k,n}|^2\big(\frac{\frac{\tau}{\overline{N}_k}\Delta_k
\widetilde{V}^*(\mathbf{Q})}{\overline{\gamma}^k}-\frac{1}{|H_{k,n}|^2}\big)^{+}\Big)-\overline{\gamma}^k\big(\frac{\frac{\tau}{\overline{N}_k}\Delta_k
\widetilde{V}^*(\mathbf{Q})}{\overline{\gamma}^k}-\frac{1}{|H_{k,n}|^2}\big)^{+}$.
Denote $k^*_n\triangleq \arg \max_k |H_{k,n}|^2$. For large $K$,
$|H_{k^*_n,n}|^2$ grows with $\log(K)$  by extreme value theory.
Because the traffic loading remains unchanged as we scale up $K$,
$\max_{k,j}|\Delta_k
\widetilde{V}^*(\mathbf{Q})-\Delta_j\widetilde{V}^*(\mathbf{Q})| =
O(1)$. Hence, $X_{k^*_n,n}$ grows like $\log(\log(K))$. As
$K\rightarrow \infty$, $\Pr[k^*_n =\arg \max_k X_{k,n}]=1$. Thus the
subband allocation result of optimal subband allocation in
\eqref{eqn:joint-subc-allo} and the best CSI subband allocation in
\eqref{eqn:best-H-subband} will be the same for large $K$. Using the
result in Lemma \ref{Lem:additive-property}, the linear Q-factor
approximation is therefore asymptotically accurate for given
$\boldsymbol{\gamma}$.
Combining the results of Theorem \ref{thm:convergence-dis}, we can
prove Theorem \ref{Thm:asymp-performance-dis-alg}.

\end{appendix}


\bibliographystyle{IEEEtran}
\bibliography{IEEEabrv,delay-sensitive-OFDMA-TW}


\begin{biography}[{\includegraphics[width=1.0in,height=1.1in,clip,keepaspectratio]{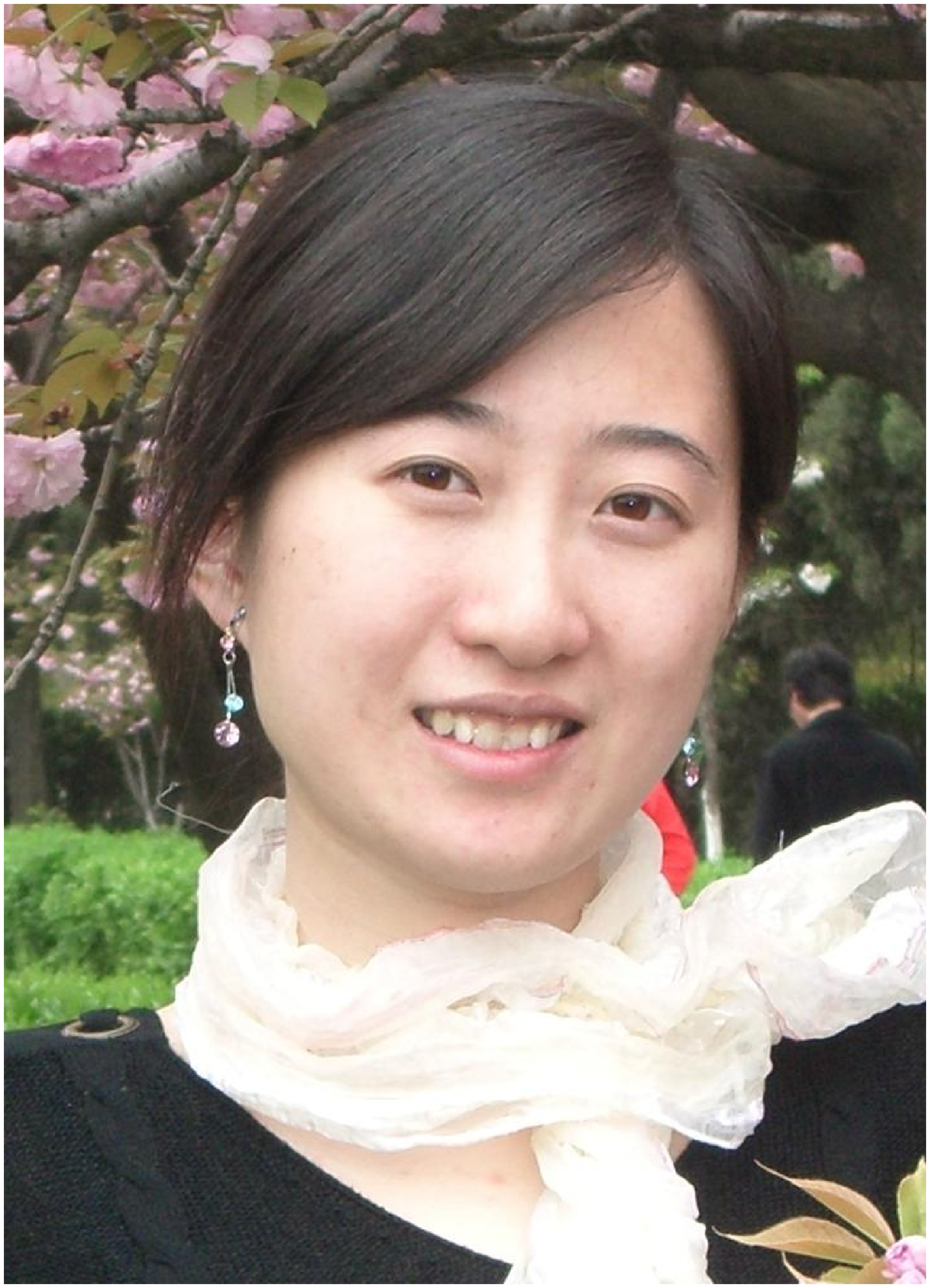}}]{Ying~Cui}
received B.Eng degree (first class honor) in Electronic and
Information Engineering, Xi¡¯an Jiaotong University, China in 2007.
She is currently a Ph.D candidate in the Department of ECE, the Hong
Kong University of Science and Technology (HKUST). Her current
research interests include cooperative and cognitive communications,
delay-sensitive cross-layer scheduling as well as stochastic
approximation and Markov Decision Process.
\end{biography}
\begin{biography}[{\includegraphics[width=1.0in,height=1.1in,clip,keepaspectratio]{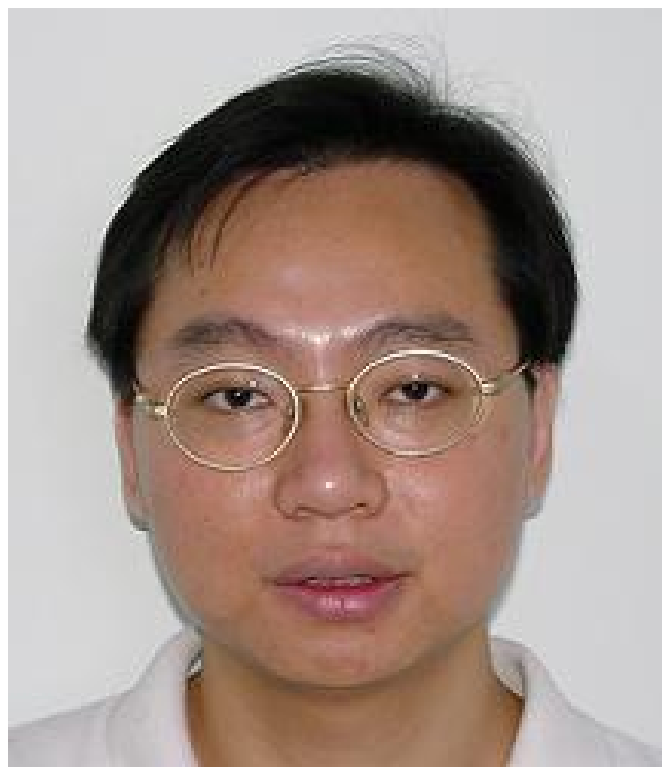}}]{Vincent~K.~N.~Lau}
obtained B.Eng (Distinction 1st Hons) from the University of Hong
Kong in 1992 and Ph.D. from Cambridge University in 1997. He was
with PCCW as system engineer from 1992-1995 and Bell Labs - Lucent
Technologies as member of technical staff from 1997-2003. He then
joined the Department of ECE, HKUST as Associate Professor. His
current research interests include the robust and delay-sensitive
cross-layer scheduling, cooperative and cognitive communications as
well as stochastic approximation and Markov Decision Process.
\end{biography}

\end{document}